\documentclass{article} 

\usepackage[nonatbib,final]{nips_2016}
\usepackage[numbers]{natbib}
\bibliographystyle{unsrtnat}
\usepackage[utf8]{inputenc}
\usepackage[T1]{fontenc}

\usepackage{xcolor}
\definecolor{mydarkred}{rgb}{0.6,0,0}
\definecolor{mydarkgreen}{rgb}{0,0.6,0}
\usepackage[colorlinks,
  linkcolor=mydarkred,
  citecolor=mydarkgreen]{hyperref}
\usepackage{url}

\usepackage{graphicx}
\graphicspath{{figure/}}
\usepackage[skip=1ex]{caption}
\usepackage[skip=0ex]{subcaption}
\captionsetup[subfigure]{labelformat=simple}

\usepackage{amsmath,amsthm,amssymb}
\newtheorem{theorem}{Theorem}
\newtheorem{lemma}[theorem]{Lemma}
\newtheorem{corollary}[theorem]{Corollary}

\usepackage{booktabs}
\usepackage{nicefrac}
\usepackage{microtype}

\DeclareMathOperator{\sign}{\mathrm{sign}}
\DeclareMathOperator*{\argmin}{\mathrm{arg\,min}}

\newcommand{\bE}{\mathbb{E}}
\newcommand{\bR}{\mathbb{R}}
\newcommand{\cG}{\mathcal{G}}
\newcommand{\cH}{\mathcal{H}}
\newcommand{\cO}{\mathcal{O}}

\newcommand{\cX}{\mathcal{X}}
\newcommand{\fR}{\mathfrak{R}}
\newcommand{\un}{\mathrm{u}}
\newcommand{\pn}{\mathrm{pn}}
\newcommand{\pu}{\mathrm{pu}}
\newcommand{\nun}{\mathrm{nu}}

\newcommand{\sr}{\mathrm{sr}}
\newcommand{\pr}{\mathrm{Pr}}

\title{Theoretical Comparisons of Positive-Unlabeled Learning against Positive-Negative Learning}

\author{Gang Niu$^1$\quad
  Marthinus C.~du Plessis$^1$\quad
  Tomoya Sakai$^1$\quad
  Yao Ma$^3$\quad
  Masashi Sugiyama$^{2,1}$\\
  ${}^1$The University of Tokyo, Japan\quad
  ${}^2$RIKEN, Japan\quad
  ${}^3$Boston University, USA\\
  \texttt{\{ gang@ms., christo@ms., sakai@ms., yao@ms., sugi@ \}k.u-tokyo.ac.jp}
}

\begin{document}
\maketitle

\begin{abstract}%
  In PU learning, a binary classifier is trained from \emph{positive}~(P) and \emph{unlabeled}~(U) data without \emph{negative}~(N) data. Although N data is missing, it sometimes outperforms PN learning (i.e., ordinary supervised learning). Hitherto, neither theoretical nor experimental analysis has been given to explain this phenomenon. In this paper, we theoretically compare PU (and NU) learning against PN learning based on the upper bounds on \emph{estimation errors}. We find simple conditions when PU and NU learning are likely to outperform PN learning, and we prove that, in terms of the upper bounds, either PU or NU learning (depending on the class-prior probability and the sizes of P and N data) given infinite U data will improve on PN learning. Our theoretical findings well agree with the experimental results on artificial and benchmark data even when the experimental setup does not match the theoretical assumptions exactly.
\end{abstract}

\section{Introduction}

\emph{Positive-unlabeled}~(PU) learning, where a binary classifier is trained from P and U data, has drawn considerable attention recently \citep{denis98alt,letouzey00alt,elkan08kdd,ward09biometrics, scott09aistats,blanchard10jmlr,christo14nips,christo15icml}. It is appealing to not only the academia but also the industry, since for example the click-through data automatically collected in search engines are highly PU due to position biases \citep{dupret08sigir,craswell08wsdm,chapelle09www}. Although PU learning uses no \emph{negative}~(N) data, it is sometimes even better than PN learning (i.e., ordinary supervised learning, perhaps with class-prior change \citep{sugi09DSML}) in practice. Nevertheless, there is neither theoretical nor experimental analysis for this phenomenon, and it is still an open problem when PU learning is likely to outperform PN learning. We clarify this question in this paper.

\paragraph{Problem settings}
For PU learning, there are two problem settings based on \emph{one sample} (OS) and \emph{two samples} (TS) of data respectively. More specifically, let $X\in\bR^d$ and $Y\in\{\pm1\}$ ($d\in\mathbb{N}$) be the input and output random variables and equipped with an \emph{underlying joint density} $p(x,y)$. In OS \citep{elkan08kdd}, a set of U data is sampled from the \emph{marginal density} $p(x)$. Then if a data point $x$ is P, this P label is observed with probability $c$, and $x$ remains U with probability $1-c$; if $x$ is N, this N label is never observed, and $x$ remains U with probability $1$. In TS \citep{ward09biometrics}, a set of P data is drawn from the \emph{positive marginal density} $p(x\mid Y=+1)$ and a set of U data is drawn from $p(x)$. Denote by $n_+$ and $n_\un$ the sizes of P and U data. As two random variables, they are fully independent in TS, and they satisfy $n_+/(n_++n_\un)\approx c\pi$ in OS where $\pi=p(Y=+1)$ is the \emph{class-prior probability}. Therefore, TS is slightly more general than OS, and we will focus on TS problem settings.

Similarly, consider TS problem settings of PN and NU learning, where a set of N data (of size $n_-$) is sampled from $p(x\mid Y=-1)$ independently of the P/U data. For PN learning, if we enforce that $n_+/(n_++n_-)\approx\pi$ when sampling the data, it will be ordinary supervised learning; otherwise, it is supervised learning with \emph{class-prior change}, a.k.a.\ \emph{prior probability shift} \citep{sugi09DSML}.

In \citep{christo14nips}, a \emph{cost-sensitive formulation} for PU learning was proposed, and its risk estimator was proven \emph{unbiased} if the surrogate loss is \emph{non-convex} and satisfies a \emph{symmetric condition}. Therefore, we can naturally compare empirical risk minimizers in PU and NU learning against that in PN learning.

\paragraph{Contributions}
We establish risk bounds of three risk minimizers in PN, PU and NU learning for comparisons in a flavor of \emph{statistical learning theory} \citep{vapnik98SLT,bousquet04ISLT}. For each minimizer, we firstly derive a \emph{uniform deviation bound} from the risk estimator to the risk using \emph{Rademacher complexities} (see, e.g., \citep{koltchinskii01tit,bartlett02jmlr,meir03jmlr,mohri12FML}), and secondly obtain an \emph{estimation error bound}. Thirdly, if the surrogate loss is \emph{classification-calibrated} \citep{bartlett06jasa}, an \emph{excess risk bound} is an immediate corollary. In \citep{christo14nips}, there was a \emph{generalization error bound} similar to our uniform deviation bound for PU learning. However, it is based on a tricky decomposition of the risk, where surrogate losses for risk minimization and risk analysis are different and labels of U data are needed for risk evaluation, so that no further bound is implied. On the other hand, ours utilizes the same surrogate loss for risk minimization and analysis and requires no label of U data for risk evaluation, so that an estimation error bound is possible.

Our main results can be summarized as follows. Denote by $\hat{g}_\pn$, $\hat{g}_\pu$ and $\hat{g}_\nun$ the risk minimizers in PN, PU and NU learning. Under a mild assumption on the function class and data distributions,
\begin{itemize}
  \vspace{-1.5ex}%
  \setlength{\itemsep}{0em}%
  \item Finite-sample case: The estimation error bound of $\hat{g}_\pu$ is tighter than that of $\hat{g}_\pn$ whenever $\pi/\sqrt{n_+}+1/\sqrt{n_\un}<(1-\pi)/\sqrt{n_-}$, and so is the bound of $\hat{g}_\nun$ tighter than that of $\hat{g}_\pn$ if $(1-\pi)/\sqrt{n_-}+1/\sqrt{n_\un}<\pi/\sqrt{n_+}$.
  \item Asymptotic case: Either the \emph{limit of bounds} of $\hat{g}_\pu$ or that of $\hat{g}_\nun$ (depending on $\pi$, $n_+$ and $n_-$) will improve on that of $\hat{g}_\pn$, if $n_+,n_-\to\infty$ in the same order and $n_\un\to\infty$ faster in order than $n_+$ and $n_-$.
  \vspace{-1.5ex}%
\end{itemize}
Notice that both results rely on only the constant $\pi$ and variables $n_+$, $n_-$ and $n_\un$; they are simple and independent of the specific forms of the function class and/or the data distributions. The asymptotic case is from the finite-sample case that is based on theoretical comparisons of the aforementioned upper bounds on the estimation errors of $\hat{g}_\pn$, $\hat{g}_\pu$ and $\hat{g}_\nun$. To the best of our knowledge, this is the first work that compares PU learning against PN learning.

Throughout the paper, we assume that the class-prior probability $\pi$ is known. In practice, it can be effectively estimated from P, N and U data \citep{saerens02neco,christo12icml,iyer2014icml} or only P and U data \citep{christo15acml,ramaswamy2016icml}.

\paragraph{Organization}
The rest of this paper is organized as follows. Unbiased estimators are reviewed in Section~\ref{sec:Estimator}. Then in Section~\ref{sec:Comparison} we present our theoretical comparisons based on risk bounds. Finally experiments are discussed in Section~\ref{sec:Experiment}.

\section{Unbiased estimators to the risk}
\label{sec:Estimator}%

For convenience, denote by $p_+(x)=p(x\mid Y=+1)$ and $p_-(x)=p(x\mid Y=-1)$ partial marginal densities. Recall that instead of data sampled from $p(x,y)$, we consider three sets of data $\cX_+$, $\cX_-$ and $\cX_\un$ which are drawn from three marginal densities $p_+(x)$, $p_-(x)$ and $p(x)$ independently.

Let $g:\bR^d\to\bR$ be a \emph{real-valued decision function} for binary classification and $\ell:\bR\times\{\pm1\}\to\bR$ be a \emph{Lipschitz-continuous loss function}. Denote by
\begin{equation*}
R_+(g)=\bE_+[\ell(g(X),+1)],\quad
R_-(g)=\bE_-[\ell(g(X),-1)]
\end{equation*}
partial risks, where $\bE_\pm[\cdot]=\bE_{X\sim p_\pm}[\cdot]$. Then the \emph{risk of $g$ w.r.t.\ $\ell$ under $p(x,y)$} is given by
\begin{equation}
  \label{eq:risk-pn}
  R(g) =\bE_{(X,Y)}[\ell(g(X),Y)] = \pi R_+(g) +(1-\pi)R_-(g).
\end{equation}
In PN learning, by approximating $R(g)$ based on Eq.~\eqref{eq:risk-pn}, we can get an \emph{empirical risk estimator} as
\begin{equation*}\textstyle%
  \widehat{R}_\pn(g) =
  \frac{\pi}{n_+}\sum_{x_i\in\cX_+}\ell(g(x_i),+1) +\frac{1-\pi}{n_-}\sum_{x_j\in\cX_-}\ell(g(x_j),-1).
\end{equation*}
For any fixed $g$, $\widehat{R}_\pn(g)$ is an \emph{unbiased} and \emph{consistent} estimator to $R(g)$ and its convergence rate is of order $\cO_p(1/\sqrt{n_+}+1/\sqrt{n_-})$ according to the \emph{central limit theorem} \citep{chung68CPT}, where $\cO_p$ denotes the order in probability.

In PU learning, $\cX_-$ is not available and then $R_-(g)$ cannot be directly estimated. However, \citep{christo14nips} has shown that we can estimate $R(g)$ without any bias if $\ell$ satisfies the following \emph{symmetric condition}:
\begin{equation}
  \label{eq:cond-sym-loss}%
  \ell(t,+1)+\ell(t,-1)=1.
\end{equation}
Specifically, let $R_{\un,-}(g)=\bE_X[\ell(g(X),-1)]=\pi\bE_+[\ell(g(X),-1)]+(1-\pi)R_-(g)$ be a risk that U data are regarded as N data. Given Eq.~\eqref{eq:cond-sym-loss}, we have $\bE_+[\ell(g(X),-1)]=1-R_+(g)$, and hence
\begin{equation}
  \label{eq:risk-pu}%
  R(g) = 2\pi R_+(g) +R_{\un,-}(g) -\pi.
\end{equation}
By approximating $R(g)$ based on \eqref{eq:risk-pu} using $\cX_+$ and $\cX_\un$, we can obtain
\begin{equation*}\textstyle%
  \widehat{R}_\pu(g) = -\pi
  +\frac{2\pi}{n_+}\sum_{x_i\in\cX_+}\ell(g(x_i),+1) +\frac{1}{n_\un}\sum_{x_j\in\cX_\un}\ell(g(x_j),-1).
\end{equation*}
Although $\widehat{R}_\pu(g)$ regards $\cX_\un$ as N data and aims at separating $\cX_+$ and $\cX_\un$ if being minimized, it is an unbiased and consistent estimator to $R(g)$ with a convergence rate $\cO_p(1/\sqrt{n_+}+1/\sqrt{n_\un})$ \citep{chung68CPT}.

Similarly, in NU learning $R_+(g)$ cannot be directly estimated. Let $R_{\un,+}(g)=\bE_X[\ell(g(X),+1)]=\pi R_+(g)+(1-\pi)\bE_-[\ell(g(X),+1)]$. Given Eq.~\eqref{eq:cond-sym-loss}, $\bE_-[\ell(g(X),+1)]=1-R_-(g)$, and
\begin{equation}
  \label{eq:risk-nu}%
  R(g) = R_{\un,+}(g) +2(1-\pi)R_-(g) -(1-\pi).
\end{equation}
By approximating $R(g)$ based on \eqref{eq:risk-nu} using $\cX_\un$ and $\cX_-$, we can obtain
\begin{equation*}\textstyle%
  \widehat{R}_\nun(g) = -(1-\pi)
  +\frac{1}{n_\un}\sum_{x_i\in\cX_\un}\ell(g(x_i),+1) +\frac{2(1-\pi)}{n_-}\sum_{x_j\in\cX_-}\ell(g(x_j),-1).
\end{equation*}

\paragraph{On the loss function}
In order to train $g$ by minimizing these estimators, it remains to specify the loss $\ell$. The \emph{zero-one loss} $\ell_{01}(t,y)=(1-\sign(ty))/2$ satisfies \eqref{eq:cond-sym-loss} but is non-smooth. \citep{christo14nips} proposed to use a \emph{scaled ramp loss} as the surrogate loss for $\ell_{01}$ in PU learning:
\begin{equation*}
  \ell_\sr(t,y)=\max\{0,\min\{1,(1-ty)/2\}\},
\end{equation*}
instead of the popular \emph{hinge loss} that does not satisfy \eqref{eq:cond-sym-loss}. Let $I(g)=\bE_{(X,Y)}[\ell_{01}(g(X),Y)]$ be the risk of $g$ w.r.t.\ $\ell_{01}$ under $p(x,y)$. Then, $\ell_\sr$ is neither an upper bound of $\ell_{01}$ so that $I(g)\le R(g)$ is not guaranteed, nor a convex loss so that it gets more difficult to know whether $\ell_\sr$ is \emph{classification-calibrated} or not \citep{bartlett06jasa}.%
\footnote{%
  A loss function $\ell$ is classification-calibrated if and only if there is a convex, invertible and nondecreasing transformation $\psi_\ell$ with $\psi_\ell(0)=0$, such that $\psi_\ell(I(g)-\inf\nolimits_gI(g)) \le R(g)-\inf\nolimits_gR(g)$ \citep{bartlett06jasa}.}
If it is, we are able to control the \emph{excess risk} w.r.t.\ $\ell_{01}$ by that w.r.t.\ $\ell$. Here we prove the classification calibration of $\ell_\sr$, and consequently it is a safe surrogate loss for $\ell_{01}$.

\begin{theorem}
  \label{thm:cc-sr}%
  The scaled ramp loss $\ell_\sr$ is classification-calibrated \textup{(see Appendix~\ref{sec:Proof} for the proof).}
\end{theorem}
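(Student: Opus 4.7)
The plan is to invoke the characterization of classification calibration for margin-based losses due to Bartlett, Jordan and McAuliffe: writing $\ell_\sr(t,y) = \phi(ty)$ with $\phi(z) = \max\{0,\min\{1,(1-z)/2\}\}$, and letting $C_\eta(t) = \eta\phi(t)+(1-\eta)\phi(-t)$ denote the pointwise conditional $\ell_\sr$-risk at a point with $\Pr(Y=+1\mid X) = \eta$, it suffices to show that for every $\eta\neq 1/2$,
\begin{equation*}
  H^-(\eta) \;:=\; \inf_{t\,:\,t(2\eta-1)\le 0} C_\eta(t) \;>\; \inf_{t\in\bR} C_\eta(t) \;=:\; H(\eta).
\end{equation*}
Once this holds one can set $\psi_\ell(\theta) = H^-((1+\theta)/2) - H((1+\theta)/2)$ (its convex lower envelope, if necessary), and the implication $\psi_\ell(I(g)-\inf I)\le R(g)-\inf R$ follows from the general theorem.

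First I would split according to the three affine pieces of $\phi$. For $|t|\le 1$, $C_\eta(t) = \tfrac12 - (2\eta-1)\tfrac{t}{2}$; for $t\ge 1$, $C_\eta(t) = 1-\eta$; for $t\le -1$, $C_\eta(t) = \eta$. Minimizing over $t\in\bR$ gives $H(\eta) = \min(\eta,1-\eta)$, attained on $[1,\infty)$ when $\eta>1/2$ and on $(-\infty,-1]$ when $\eta<1/2$. Minimizing over the wrong-side halfspace (i.e. $t\le 0$ when $\eta>1/2$, $t\ge 0$ when $\eta<1/2$) gives $H^-(\eta)=1/2$ in both cases. Hence $H^-(\eta)-H(\eta) = 1/2 - \min(\eta,1-\eta) = |2\eta-1|/2 > 0$ whenever $\eta\neq 1/2$, which already establishes calibration.

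To make the bound $\psi_\ell(I(g)-\inf_g I(g))\le R(g)-\inf_g R(g)$ concrete, compute $\psi_\ell(\theta) = 1/2-\min\bigl(\tfrac{1+\theta}{2},\tfrac{1-\theta}{2}\bigr)=|\theta|/2$ for $\theta\in[-1,1]$. Restricted to the excess $0$--$1$ risk range $[0,1]$, this is $\psi_\ell(\theta)=\theta/2$, which is linear, hence convex, strictly increasing, invertible ($\psi_\ell^{-1}(u)=2u$), and satisfies $\psi_\ell(0)=0$, matching all the properties required by the footnote's definition.

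The main obstacle is that $\ell_\sr$ is non-convex, so the convenient sufficient condition for convex margin losses (differentiability at the origin with negative derivative) is unavailable; one genuinely has to compute $H$ and $H^-$ from the piecewise-linear definition. However, because $\phi$ is itself piecewise linear with only three pieces, $C_\eta(t)$ is piecewise linear in $t$ and the minima are attained at the corners $t\in\{-1,0,1\}$, so the case analysis is short and completely elementary; no analytic subtleties arise.
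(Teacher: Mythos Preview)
Your proposal is correct and follows essentially the same route as the paper: both compute the piecewise-linear conditional risk $C_\eta(t)$ and then appeal to the Bartlett--Jordan--McAuliffe characterization of calibration. The paper is terser, stopping after noting that the Bayes classifier $\sign(2\eta-1)$ minimizes $C_\eta$ and citing Theorem~1.3(c) of Bartlett et~al., whereas you go slightly further by directly verifying $H^-(\eta)>H(\eta)$ and computing the explicit transform $\psi_\ell(\theta)=\theta/2$.
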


\section{Theoretical comparisons based on risk bounds}
\label{sec:Comparison}%

When learning is involved, suppose we are given a \emph{function class} $\cG$, and let $g^*=\argmin_{g\in\cG}R(g)$ be the optimal decision function in $\cG$, $\hat{g}_\pn=\argmin_{g\in\cG}\widehat{R}_\pn(g)$, $\hat{g}_\pu=\argmin_{g\in\cG}\widehat{R}_\pu(g)$, and $\hat{g}_\nun=\argmin_{g\in\cG}\widehat{R}_\nun(g)$ be arbitrary global minimizers to three risk estimators. Furthermore, let $R^*=\inf_gR(g)$ and $I^*=\inf_gI(g)$ denote the Bayes risks w.r.t.\ $\ell$ and $\ell_{01}$, where the infimum of $g$ is over all measurable functions.

In this section, we derive and compare risk bounds of three risk minimizers $\hat{g}_\pn$, $\hat{g}_\pu$ and $\hat{g}_\nun$ under the following mild assumption on $\cG$, $p(x)$, $p_+(x)$ and $p_-(x)$: There is a constant $C_\cG>0$ such that
\begin{equation}
  \label{eq:cond-rade-comp}%
  \fR_{n,q}(\cG) \le C_\cG/\sqrt{n}
\end{equation}
for any marginal density $q(x)\in\{p(x),p_+(x),p_-(x)\}$, where
\begin{equation*}\textstyle%
  \fR_{n,q}(\cG) = \bE_{\cX\sim q^n}\bE_\sigma
  \left[\sup_{g\in\cG}\frac{1}{n}\sum_{x_i\in\cX}\sigma_ig(x_i)\right]
\end{equation*}
is the \emph{Rademacher complexity} of $\cG$ for the sampling of size $n$ from $q(x)$
(that is, $\cX=\{x_1,\ldots,x_n\}$ and $\sigma=\{\sigma_1,\ldots,\sigma_n\}$, with each $x_i$ drawn from $q(x)$ and each $\sigma_i$ as a \emph{Rademacher variable}) \citep{mohri12FML}. A special case is covered, namely, sets of \emph{hyperplanes with bounded normals and feature maps}:
\begin{equation}
  \label{eq:fun-class}%
  \cG = \{g(x)=\langle{w,\phi(x)}\rangle_\cH \mid \|w\|_\cH\le C_w, \|\phi(x)\|_\cH\le C_\phi \},
\end{equation}
where $\cH$ is a Hilbert space with an inner product $\langle\cdot,\cdot\rangle_\cH$, $w\in\cH$ is a normal vector, $\phi:\bR^d\to\cH$ is a feature map, and $C_w>0$ and $C_\phi>0$ are constants \citep{scholkopf01LK}.

\subsection{Risk bounds}

Let $L_\ell$ be the \emph{Lipschitz constant} of $\ell$ in its first parameter. To begin with, we establish the learning guarantee of $\hat{g}_\pu$ (the proof can be found in Appendix~\ref{sec:Proof}).

\begin{theorem}
  \label{thm:guarantee-pu}%
  Assume \eqref{eq:cond-sym-loss}. For any $\delta>0$, with probability at least $1-\delta$,%
  \footnote{%
    Here, the probability is over repeated sampling of data for training $\hat{g}_\pu$, while in Lemma~\ref{thm:uni-dev-pu}, it will be for evaluating $\widehat{R}_\pu(g)$.}
  \begin{equation}
    \label{eq:est-err-pu}\textstyle%
    R(\hat{g}_\pu)-R(g^*) \le
    8\pi L_\ell\fR_{n_+,p_+}(\cG) +4L_\ell\fR_{n_\un,p}(\cG)
    +2\pi\sqrt{\frac{2\ln(4/\delta)}{n_+}} +\sqrt{\frac{2\ln(4/\delta)}{n_\un}},
  \end{equation}
  where $\fR_{n_+,p_+}(\cG)$ and $\fR_{n_\un,p}(\cG)$ are the Rademacher complexities of $\cG$ for the sampling of size $n_+$ from $p_+(x)$ and the sampling of size $n_\un$ from $p(x)$. Moreover, if $\ell$ is a classification-calibrated loss, there exists nondecreasing $\varphi$ with $\varphi(0)=0$, such that with probability at least $1-\delta$,
  \begin{equation}
    \label{eq:ex-risk-pu}\textstyle%
    I(\hat{g}_\pu)-I^* \le
    \varphi\Big( R(g^*)-R^* +8\pi L_\ell\fR_{n_+,p_+}(\cG) +4L_\ell\fR_{n_\un,p}(\cG)
    +2\pi\sqrt{\frac{2\ln(4/\delta)}{n_+}} +\sqrt{\frac{2\ln(4/\delta)}{n_\un}} \Big).
  \end{equation}
\end{theorem}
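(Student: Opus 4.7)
The plan is a standard symmetrization-plus-concentration argument tailored to the PU decomposition \eqref{eq:risk-pu}. By optimality of $\hat g_\pu$ with respect to $\widehat R_\pu$ we have
\[
R(\hat g_\pu) - R(g^*) \le \bigl(R(\hat g_\pu) - \widehat R_\pu(\hat g_\pu)\bigr) + \bigl(\widehat R_\pu(g^*) - R(g^*)\bigr) \le 2\sup_{g\in\cG}|\widehat R_\pu(g) - R(g)|,
\]
so the whole task reduces to a uniform deviation bound for $\widehat R_\pu$; the excess-risk bound \eqref{eq:ex-risk-pu} will then follow automatically from the calibration property.

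For the uniform deviation, I would use \eqref{eq:cond-sym-loss} and \eqref{eq:risk-pu} to decompose
\[
\widehat R_\pu(g) - R(g) = 2\pi\bigl(\widehat\bE_+[\ell(g(X),+1)] - \bE_+[\ell(g(X),+1)]\bigr) + \bigl(\widehat\bE_\un[\ell(g(X),-1)] - \bE_X[\ell(g(X),-1)]\bigr),
\]
where $\widehat\bE_+$ and $\widehat\bE_\un$ denote empirical averages over $\cX_+$ and $\cX_\un$. Since $\cX_+$ and $\cX_\un$ are independent, the triangle inequality inside the supremum splits the problem into a P-supremum and a U-supremum, which I would treat identically. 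For each I would (i) apply McDiarmid's inequality with bounded differences $1/n_+$ and $1/n_\un$, which is valid because \eqref{eq:cond-sym-loss} forces $\ell\in[0,1]$; (ii) bound the expected supremum by ghost-sample symmetrization, giving twice the Rademacher complexity of the loss class $\{x\mapsto\ell(g(x),\pm 1):g\in\cG\}$; and (iii) peel off the $L_\ell$-Lipschitz loss by Talagrand's contraction lemma, reducing the expected supremum to $2L_\ell\fR_{n,q}(\cG)$ with $q\in\{p_+(x),p(x)\}$. A $\delta/2$ union bound across the two samples, followed by multiplication by the coefficients $2\pi$ and $1$ and the ERM factor of two, should reproduce exactly the constants $8\pi L_\ell$, $4L_\ell$, $2\pi\sqrt{2\ln(4/\delta)/n_+}$ and $\sqrt{2\ln(4/\delta)/n_\un}$ appearing in \eqref{eq:est-err-pu}.

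For \eqref{eq:ex-risk-pu} I would invoke the calibration guarantee in the footnote: since $\ell$ is classification-calibrated, there is a convex, invertible, nondecreasing $\psi_\ell$ with $\psi_\ell(0)=0$ such that $\psi_\ell(I(\hat g_\pu)-I^*) \le R(\hat g_\pu)-R^*$. Taking $\varphi=\psi_\ell^{-1}$, which is also nondecreasing with $\varphi(0)=0$, writing $R(\hat g_\pu)-R^* = (R(\hat g_\pu)-R(g^*)) + (R(g^*)-R^*)$, and substituting \eqref{eq:est-err-pu} inside the monotone $\varphi$ delivers \eqref{eq:ex-risk-pu}. The step I expect to be the main obstacle is the constant-bookkeeping: the two samples have different sizes and enter $\widehat R_\pu$ with different coefficients ($2\pi$ for P, $1$ for U), so the McDiarmid bounded-differences constants and the Talagrand Lipschitz factor both scale asymmetrically, and these must be combined with the ERM doubling in exactly the right order to match the advertised constants rather than lose a factor of two somewhere.
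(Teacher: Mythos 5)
Your proposal is correct and follows essentially the same route as the paper: the ERM decomposition $R(\hat g_\pu)-R(g^*)\le 2\sup_{g\in\cG}|\widehat R_\pu(g)-R(g)|$, the split of the uniform deviation into a $2\pi$-weighted P-term and a U-term, McDiarmid plus symmetrization plus Talagrand's contraction on each with a $\delta/2$ union bound, and $\varphi=\psi_\ell^{-1}$ for the excess-risk corollary. The constant bookkeeping you flag as the main obstacle works out exactly as you predict (e.g.\ $2\cdot 2\pi\sqrt{\ln(4/\delta)/(2n_+)}=2\pi\sqrt{2\ln(4/\delta)/n_+}$), matching the paper's Lemma~\ref{thm:uni-dev-pu} and Theorem~\ref{thm:guarantee-pu}.
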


In Theorem~\ref{thm:guarantee-pu}, $R(\hat{g}_\pu)$ and $I(\hat{g}_\pu)$ are w.r.t.\ $p(x,y)$, though $\hat{g}_\pu$ is trained from two samples following $p_+(x)$ and $p(x)$. We can see that \eqref{eq:est-err-pu} is an upper bound of the \emph{estimation error} of $\hat{g}_\pu$ w.r.t.\ $\ell$, whose right-hand side (RHS) is small if $\cG$ is small; \eqref{eq:ex-risk-pu} is an upper bound of the \emph{excess risk} of $\hat{g}_\pu$ w.r.t.\ $\ell_{01}$, whose RHS also involves the \emph{approximation error} of $\cG$ (i.e., $R(g^*)-R^*$) that is small if $\cG$ is large. When $\cG$ is fixed and satisfies \eqref{eq:cond-rade-comp}, we have $\fR_{n_+,p_+}(\cG)=\cO(1/\sqrt{n_+})$ and $\fR_{n_\un,p}(\cG)=\cO(1/\sqrt{n_\un})$, and then
\[ R(\hat{g}_\pu)-R(g^*)\to0,\quad I(\hat{g}_\pu)-I^*\to\varphi(R(g^*)-R^*) \]
in $\cO_p(1/\sqrt{n_+}+1/\sqrt{n_\un})$. On the other hand, when the size of $\cG$ grows with $n_+$ and $n_\un$ properly, those complexities of $\cG$ vanish slower in order than $\cO(1/\sqrt{n_+})$ and $\cO(1/\sqrt{n_\un})$ but we may have
\[ R(\hat{g}_\pu)-R(g^*)\to0,\quad I(\hat{g}_\pu)-I^*\to0, \]
which means $\hat{g}_\pu$ approaches the Bayes classifier if $\ell$ is a classification-calibrated loss, in an order slower than $\cO_p(1/\sqrt{n_+}+1/\sqrt{n_\un})$ due to the growth of $\cG$.

Similarly, we can derive the learning guarantees of $\hat{g}_\pn$ and $\hat{g}_\nun$ for comparisons. We will just focus on estimation error bounds, because excess risk bounds are their immediate corollaries.

\begin{theorem}
  \label{thm:guarantee-pn}%
  Assume \eqref{eq:cond-sym-loss}. For any $\delta>0$, with probability at least $1-\delta$,
  \begin{equation}
    \label{eq:est-err-pn}\textstyle%
    R(\hat{g}_\pn)-R(g^*) \le
    4\pi L_\ell\fR_{n_+,p_+}(\cG) +4(1-\pi)L_\ell\fR_{n_-,p_-}(\cG)
    +\pi\sqrt{\frac{2\ln(4/\delta)}{n_+}} +(1-\pi)\sqrt{\frac{2\ln(4/\delta)}{n_-}},
  \end{equation}
  where $\fR_{n_-,p_-}(\cG)$ is the Rademacher complexity of $\cG$ for the sampling of size $n_-$ from $p_-(x)$.
\end{theorem}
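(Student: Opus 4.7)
The plan is to mirror the proof of Theorem~\ref{thm:guarantee-pu}: first obtain a uniform deviation bound between $\widehat{R}_\pn$ and $R$ over $\cG$, and then convert it to an estimation error bound through the standard empirical-risk-minimization argument.

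The starting point is the clean decomposition
\begin{equation*}
\widehat{R}_\pn(g) - R(g) = \pi\bigl(\widehat{R}_+(g) - R_+(g)\bigr) + (1-\pi)\bigl(\widehat{R}_-(g) - R_-(g)\bigr),
\end{equation*}
where $\widehat{R}_\pm(g) = \frac{1}{n_\pm}\sum_{x \in \cX_\pm} \ell(g(x),\pm 1)$ are the empirical partial risks. A triangle inequality after taking the supremum over $\cG$ splits the uniform deviation into a P piece and an N piece, which can be attacked independently with the standard Rademacher machinery. For each piece, symmetrization followed by Talagrand's contraction lemma (valid since $\ell$ is $L_\ell$-Lipschitz in its first argument) shows that both one-sided expected suprema $\bE[\sup_g (\widehat{R}_\pm - R_\pm)]$ and $\bE[\sup_g (R_\pm - \widehat{R}_\pm)]$ are at most $2 L_\ell \fR_{n_\pm, p_\pm}(\cG)$. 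Because \eqref{eq:cond-sym-loss} combined with $\ell\ge 0$ forces $\ell \in [0,1]$, replacing a single example in $\cX_\pm$ changes each of these suprema by at most $1/n_\pm$, so McDiarmid's inequality gives tail deviation $\sqrt{\ln(4/\delta)/(2 n_\pm)}$ with probability $1-\delta/4$.

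A union bound over the four one-sided events (upper and lower, for P and for N) yields, with probability at least $1-\delta$,
\begin{equation*}
\sup_{g \in \cG} |\widehat{R}_\pn(g) - R(g)| \le 2\pi L_\ell \fR_{n_+, p_+}(\cG) + 2(1-\pi) L_\ell \fR_{n_-, p_-}(\cG) + \pi \sqrt{\tfrac{\ln(4/\delta)}{2 n_+}} + (1-\pi)\sqrt{\tfrac{\ln(4/\delta)}{2n_-}}.
\end{equation*}
The estimation error then drops out from the two-line chain $R(\hat{g}_\pn) - R(g^*) \le (R(\hat{g}_\pn) - \widehat{R}_\pn(\hat{g}_\pn)) + (\widehat{R}_\pn(g^*) - R(g^*)) \le 2 \sup_g |\widehat{R}_\pn(g) - R(g)|$, where the empirical optimality of $\hat{g}_\pn$ kills the middle term. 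Doubling the uniform deviation bound turns $2 \cdot 2 L_\ell$ into $4 L_\ell$ and $2\sqrt{\ln(4/\delta)/(2 n_\pm)}$ into $\sqrt{2\ln(4/\delta)/n_\pm}$, reproducing \eqref{eq:est-err-pn} exactly.

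I do not anticipate any real obstacle: this is a direct specialization of the argument used for Theorem~\ref{thm:guarantee-pu}, and in fact it is easier, because $\widehat{R}_\pn$ splits into two independent partial-risk averages with no inflated $2\pi$ coefficient and no additive $-\pi$ constant to carry. The only bookkeeping worth a second look is the $\delta/4$ per-event tail budget, which is exactly what produces the $\ln(4/\delta)$ factor in the final statement.
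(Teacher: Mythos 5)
Your proof is correct and takes essentially the same route as the paper, which omits the proof of this theorem precisely because it is the template of Theorem~\ref{thm:guarantee-pu} (decompose into partial-risk deviations, bound each via McDiarmid plus symmetrization plus Talagrand's contraction, then double via the standard ERM argument) with the weights $\pi$ and $1-\pi$ in place of $2\pi$ and $1$. The only cosmetic difference is your bookkeeping of four one-sided events at $\delta/4$ each versus the paper's two two-sided events at $\delta/2$ each; both give the same $\ln(4/\delta)$ constants.
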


\begin{theorem}
  \label{thm:guarantee-nu}%
  Assume \eqref{eq:cond-sym-loss}. For any $\delta>0$, with probability at least $1-\delta$,
  \begin{equation}
    \label{eq:est-err-nu}\textstyle%
    R(\hat{g}_\nun)-R(g^*) \le
    4L_\ell\fR_{n_\un,p}(\cG) +8(1-\pi)L_\ell\fR_{n_-,p_-}(\cG)
    +\sqrt{\frac{2\ln(4/\delta)}{n_\un}} +2(1-\pi)\sqrt{\frac{2\ln(4/\delta)}{n_-}}.
  \end{equation}
\end{theorem}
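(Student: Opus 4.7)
The plan is to mirror the proof of Theorem~\ref{thm:guarantee-pu} essentially verbatim, with $\cX_-$ playing the role that $\cX_+$ plays there, $\cX_\un$ still playing the U-sample role but now carrying the loss $\ell(\cdot,+1)$, and the NU identity \eqref{eq:risk-nu} replacing the PU identity \eqref{eq:risk-pu}. Writing
\begin{equation*}\textstyle
\widehat{R}_\nun(g)-R(g) = \bigl(\tfrac{1}{n_\un}\sum_{x_i\in\cX_\un}\ell(g(x_i),+1)-R_{\un,+}(g)\bigr) + 2(1-\pi)\bigl(\tfrac{1}{n_-}\sum_{x_j\in\cX_-}\ell(g(x_j),-1)-R_-(g)\bigr),
\end{equation*}
I would bound the uniform deviation of each summand over $\cG$ separately and add.

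For each summand I would execute the standard three-step recipe. First, apply McDiarmid's bounded-difference inequality to the supremum: the symmetric condition \eqref{eq:cond-sym-loss} together with $\ell\ge 0$ forces $\ell\in[0,1]$, so replacing one $x_i\in\cX_\un$ shifts the first summand by at most $1/n_\un$ and replacing one $x_j\in\cX_-$ shifts the second by at most $2(1-\pi)/n_-$. Applying McDiarmid to each side of each summand (four events in total, each at level $\delta/4$, union-bounded) yields two-sided concentration contributions $\sqrt{\ln(4/\delta)/(2n_\un)}$ and $2(1-\pi)\sqrt{\ln(4/\delta)/(2n_-)}$. Second, symmetrization passes from the expected supremum to the Rademacher averages $2\fR_{n_\un,p}(\ell\circ\cG)$ and $2\fR_{n_-,p_-}(\ell\circ\cG)$ (scaled by $1$ and $2(1-\pi)$). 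Third, Talagrand's contraction lemma with Lipschitz constant $L_\ell$ strips the loss and lands on $2L_\ell\fR_{n_\un,p}(\cG)$ and $4(1-\pi)L_\ell\fR_{n_-,p_-}(\cG)$. Altogether, with probability at least $1-\delta$,
\begin{equation*}\textstyle
\sup_{g\in\cG}|\widehat{R}_\nun(g)-R(g)| \le 2L_\ell\fR_{n_\un,p}(\cG)+4(1-\pi)L_\ell\fR_{n_-,p_-}(\cG)+\sqrt{\tfrac{\ln(4/\delta)}{2n_\un}}+2(1-\pi)\sqrt{\tfrac{\ln(4/\delta)}{2n_-}}.
\end{equation*}

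The conclusion follows from the standard reduction $R(\hat{g}_\nun)-R(g^*)\le 2\sup_{g\in\cG}|\widehat{R}_\nun(g)-R(g)|$ (a direct consequence of $\widehat{R}_\nun(\hat{g}_\nun)\le\widehat{R}_\nun(g^*)$), which doubles every term in the previous display and produces exactly the claimed coefficients $4L_\ell$, $8(1-\pi)L_\ell$, $\sqrt{2\ln(4/\delta)/n_\un}$ and $2(1-\pi)\sqrt{2\ln(4/\delta)/n_-}$. The argument is entirely parallel to that of Theorem~\ref{thm:guarantee-pu}, so I foresee no genuine obstacle; the only thing requiring care is bookkeeping of the constant $2(1-\pi)$, which enters quadratically inside McDiarmid's $\sum c_i^2$ (surviving as $2(1-\pi)$ after the square root) and linearly through symmetrization and contraction (producing $4(1-\pi)$, doubled once more to $8(1-\pi)$ by the estimation-error reduction), so that every multiplicative factor in the final bound lines up with the stated expression.
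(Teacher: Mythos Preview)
Your proposal is correct and follows exactly the route the paper intends: the authors explicitly state that the proofs of Theorems~\ref{thm:guarantee-pn} and \ref{thm:guarantee-nu} are ``essentially similar to that of Theorem~\ref{thm:guarantee-pu} relying on slightly different uniform deviation bounds,'' and your decomposition via \eqref{eq:risk-nu}, the McDiarmid/symmetrization/contraction recipe, and the final doubling step all mirror the paper's Lemma~\ref{thm:uni-dev-pu} and Theorem~\ref{thm:guarantee-pu} with the correct constants.
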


In order to compare the bounds, we simplify \eqref{eq:est-err-pn}, \eqref{eq:est-err-pu} and \eqref{eq:est-err-nu} using Eq.~\eqref{eq:cond-rade-comp}. To this end, we define $f(\delta)=4L_\ell C_\cG+\sqrt{2\ln(4/\delta)}$. For the special case of $\cG$ defined in \eqref{eq:fun-class}, define $f(\delta)$ accordingly as $f(\delta)=4L_\ell C_wC_\phi+\sqrt{2\ln(4/\delta)}$.

\begin{corollary}
  \label{thm:est-err-cmp}%
  The estimation error bounds below hold separately with probability at least $1-\delta$:
  \begin{align}
    \label{eq:est-err-pn-cmp}%
    R(\hat{g}_\pn)-R(g^*) &
    \le f(\delta)\cdot\{\pi/\sqrt{n_+}+(1-\pi)/\sqrt{n_-}\},\\
    \label{eq:est-err-pu-cmp}%
    R(\hat{g}_\pu)-R(g^*) &
    \le f(\delta)\cdot\{2\pi/\sqrt{n_+}+1/\sqrt{n_\un}\},\\
    \label{eq:est-err-nu-cmp}%
    R(\hat{g}_\nun)-R(g^*) &
    \le f(\delta)\cdot\{1/\sqrt{n_\un}+2(1-\pi)/\sqrt{n_-}\}.
  \end{align}
\end{corollary}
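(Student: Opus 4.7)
The plan is to deduce each of \eqref{eq:est-err-pn-cmp}, \eqref{eq:est-err-pu-cmp} and \eqref{eq:est-err-nu-cmp} by a single substitution of the assumption \eqref{eq:cond-rade-comp} into the bounds already proved in Theorems~\ref{thm:guarantee-pn}, \ref{thm:guarantee-pu} and \ref{thm:guarantee-nu}, and then by grouping the resulting coefficients so that the compound constant $f(\delta)=4L_\ell C_\cG+\sqrt{2\ln(4/\delta)}$ factors out cleanly. No new concentration argument is needed; all of the probabilistic work is already done in Section~3.1, and the three claims hold separately because each is merely a rewriting of the corresponding theorem.

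I would carry this out in the same order as the corollary lists. For \eqref{eq:est-err-pn-cmp}, apply $\fR_{n_+,p_+}(\cG)\le C_\cG/\sqrt{n_+}$ and $\fR_{n_-,p_-}(\cG)\le C_\cG/\sqrt{n_-}$ to \eqref{eq:est-err-pn}; the two Rademacher terms become $4\pi L_\ell C_\cG/\sqrt{n_+}$ and $4(1-\pi)L_\ell C_\cG/\sqrt{n_-}$, and pairing each with the matching concentration term and pulling out $\pi/\sqrt{n_+}$ and $(1-\pi)/\sqrt{n_-}$ leaves $4L_\ell C_\cG+\sqrt{2\ln(4/\delta)}=f(\delta)$ as the common leftover factor. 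The PU and NU cases are structurally identical: in \eqref{eq:est-err-pu}, the prefactors are $8\pi L_\ell$ on the Rademacher side and $2\pi$ on the concentration side for the $\cX_+$ contribution, so the common factorization extracts $2\pi/\sqrt{n_+}$ times $f(\delta)$, while the $\cX_\un$ contribution has unit prefactors and produces $(1/\sqrt{n_\un})\cdot f(\delta)$; in \eqref{eq:est-err-nu}, the analogous accounting with $\cX_\un$ and $\cX_-$ yields $1/\sqrt{n_\un}$ and $2(1-\pi)/\sqrt{n_-}$ as the two factors of $f(\delta)$. The alternative form of $f(\delta)$ for the hyperplane class \eqref{eq:fun-class} follows by the same computation because the standard bound $\fR_{n,q}(\cG)\le C_wC_\phi/\sqrt{n}$ plays the role of \eqref{eq:cond-rade-comp} with $C_\cG=C_wC_\phi$.

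There is essentially no obstacle: the derivation is arithmetic once one believes the three theorems. The only care-worthy bookkeeping is checking that the factor of $4$ built into $f(\delta)$ exactly matches the constant $4L_\ell$ appearing in front of every Rademacher term, and that each concentration tail is paired with the Rademacher term over the same sample before the factorization, so that every grouping produces the same $f(\delta)$ and not a slightly different constant.
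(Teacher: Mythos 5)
Your proposal is correct and follows essentially the same route as the paper: substitute $\fR_{n,q}(\cG)\le C_\cG/\sqrt{n}$ from \eqref{eq:cond-rade-comp} into each of \eqref{eq:est-err-pn}, \eqref{eq:est-err-pu} and \eqref{eq:est-err-nu}, pair each Rademacher term with the concentration term over the same sample, and factor out $f(\delta)=4L_\ell C_\cG+\sqrt{2\ln(4/\delta)}$, with the hyperplane class handled via $\fR_{n,q}(\cG)\le C_wC_\phi/\sqrt{n}$ so that $C_\cG=C_wC_\phi$. The bookkeeping you flag (matching the factor $4L_\ell$ and the per-sample pairing) checks out exactly as in the paper's own computation.
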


\subsection{Finite-sample comparisons}

Note that three risk minimizers $\hat{g}_\pn$, $\hat{g}_\pu$ and $\hat{g}_\nun$ work in similar problem settings and their bounds in Corollary~\ref{thm:est-err-cmp} are proven using exactly the same proof technique. Then, the differences in bounds reflect the intrinsic differences between risk minimizers. Let us compare those bounds. Define
\begin{align}
  \label{eq:alpha-pu-pn}%
  \alpha_{\pu,\pn} &= \left(\pi/\sqrt{n_+}+1/\sqrt{n_\un}\right)/\left((1-\pi)/\sqrt{n_-}\right),\\
  \label{eq:alpha-nu-pn}%
  \alpha_{\nun,\pn} &= \left((1-\pi)/\sqrt{n_-}+1/\sqrt{n_\un}\right)/\left(\pi/\sqrt{n_+}\right).
\end{align}
Eqs.~\eqref{eq:alpha-pu-pn} and \eqref{eq:alpha-nu-pn} constitute our first main result.

\begin{theorem}[Finite-sample comparisons]
  \label{thm:finite-sample-cmp}%
  Assume \eqref{eq:cond-rade-comp} is satisfied. Then the estimation error bound of $\hat{g}_\pu$ in \eqref{eq:est-err-pu-cmp} is tighter than that of $\hat{g}_\pn$ in \eqref{eq:est-err-pn-cmp} if and only if $\alpha_{\pu,\pn}<1$; also, the estimation error bound of $\hat{g}_\nun$ in \eqref{eq:est-err-nu-cmp} is tighter than that of $\hat{g}_\pn$ if and only if $\alpha_{\nun,\pn}<1$.
\end{theorem}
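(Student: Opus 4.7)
The plan is to observe that Theorem~\ref{thm:finite-sample-cmp} is a direct algebraic consequence of Corollary~\ref{thm:est-err-cmp}, so almost all of the work has already been done. Under assumption~\eqref{eq:cond-rade-comp}, the three estimation error bounds collapse to $f(\delta)$ times simple sums of $1/\sqrt{n}$ terms, and $f(\delta)=4L_\ell C_\cG+\sqrt{2\ln(4/\delta)}>0$. Hence the bound on $R(\hat{g}_\pu)-R(g^*)$ is strictly tighter than that on $R(\hat{g}_\pn)-R(g^*)$ if and only if the corresponding coefficients satisfy
\[
  2\pi/\sqrt{n_+}+1/\sqrt{n_\un} < \pi/\sqrt{n_+}+(1-\pi)/\sqrt{n_-}.
\]

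First I would subtract $\pi/\sqrt{n_+}$ from both sides, yielding the equivalent inequality $\pi/\sqrt{n_+}+1/\sqrt{n_\un}<(1-\pi)/\sqrt{n_-}$. Since $(1-\pi)/\sqrt{n_-}>0$ (we may tacitly assume $\pi\in(0,1)$ and $n_-\ge1$; otherwise PN learning is degenerate), dividing both sides by this positive quantity produces exactly the condition $\alpha_{\pu,\pn}<1$ as defined in~\eqref{eq:alpha-pu-pn}. The chain of implications is reversible at every step, which delivers the ``if and only if.'' The argument for $\hat{g}_\nun$ versus $\hat{g}_\pn$ is entirely symmetric: subtract $(1-\pi)/\sqrt{n_-}$ from both sides of the corresponding inequality, then divide by $\pi/\sqrt{n_+}>0$ to obtain~$\alpha_{\nun,\pn}<1$.

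There is no real obstacle here beyond verifying signs: the only thing to be careful about is that the common factor $f(\delta)$ and the denominators $(1-\pi)/\sqrt{n_-}$, $\pi/\sqrt{n_+}$ are strictly positive, so that multiplying and dividing preserves the direction of the inequality and makes the equivalence biconditional. The genuine content of the theorem has already been packaged into Corollary~\ref{thm:est-err-cmp}; the present statement is essentially a rewriting of those three coefficients into a form that is easy to interpret and that will be the launching point for the asymptotic comparison that follows.
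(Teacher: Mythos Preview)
Your proposal is correct and follows essentially the same approach as the paper: both observe that the bounds in Corollary~\ref{thm:est-err-cmp} share a common positive factor $f(\delta)$ and a common additive term ($\pi/\sqrt{n_+}$ for the PU--PN comparison, $(1-\pi)/\sqrt{n_-}$ for the NU--PN comparison), so that subtracting the shared term and dividing by the remaining positive quantity yields the $\alpha$-ratios in~\eqref{eq:alpha-pu-pn} and~\eqref{eq:alpha-nu-pn}. The paper phrases this as $\alpha_{\pu,\pn}=(V_\pu-\pi f(\delta)/\sqrt{n_+})/(V_\pn-\pi f(\delta)/\sqrt{n_+})$, which is exactly your subtraction-then-division spelled out.
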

\begin{proof}\vskip-1em%
  Fix $\pi$, $n_+$, $n_-$ and $n_\un$, and then denote by $V_\pn$, $V_\pu$ and $V_\nun$ the values of the RHSs of \eqref{eq:est-err-pn-cmp}, \eqref{eq:est-err-pu-cmp} and \eqref{eq:est-err-nu-cmp}. In fact, the definitions of $\alpha_{\pu,\pn}$ and $\alpha_{\nun,\pn}$ in \eqref{eq:alpha-pu-pn} and \eqref{eq:alpha-nu-pn} came from
  \begin{equation*}
    \alpha_{\pu,\pn}=\frac{V_\pu-\pi f(\delta)/\sqrt{n_+}}{V_\pn-\pi f(\delta)/\sqrt{n_+}},\quad
    \alpha_{\nun,\pn}=\frac{V_\nun-(1-\pi)f(\delta)/\sqrt{n_-}}{V_\pn-(1-\pi)f(\delta)/\sqrt{n_-}}.
  \end{equation*}
  As a consequence, compared with $V_\pn$, $V_\pu$ is smaller and \eqref{eq:est-err-pu-cmp} is tighter if and only if $\alpha_{\pu,\pn}<1$, and $V_\nun$ is smaller and \eqref{eq:est-err-nu-cmp} is tighter if and only if $\alpha_{\nun,\pn}<1$.
\end{proof}\vskip-1ex%

We analyze some properties of $\alpha_{\pu,\pn}$ before going to our second main result. The most important property is that it relies on $\pi$, $n_+$, $n_-$ and $n_\un$ only; it is independent of $\cG$, $p(x,y)$, $p(x)$, $p_+(x)$ and $p_-(x)$ as long as \eqref{eq:cond-rade-comp} is satisfied. Next, $\alpha_{\pu,\pn}$ is obviously a monotonic function of $\pi$, $n_+$, $n_-$ and $n_\un$. Furthermore, it is unbounded no matter if $\pi$ is fixed or not. Properties of $\alpha_{\nun,\pn}$ are similar, as summarized in Table~\ref{tab:property}.

\begin{table}[t]
  \centering\small
  \caption{Properties of $\alpha_{\pu,\pn}$ and $\alpha_{\nun,\pn}$.}
  \label{tab:property}
  \begin{tabular*}{0.91\textwidth}{ccccccc}
    \toprule
    & \multicolumn{2}{c}{no specification}
    & \multicolumn{2}{c}{sizes are proportional}
    & \multicolumn{2}{c}{$\rho_\pn=\pi/(1-\pi)$} \\
    \cmidrule(lr){2-3} \cmidrule(lr){4-5} \cmidrule(lr){6-7}
    & mono.~inc. & mono.~dec.
    & mono.~inc. & mono.~dec.
    & mono.~inc. & minimum \\
    \midrule
    $\alpha_{\pu,\pn}$ & $\pi,n_-$ & $n_+,n_\un$
    & $\pi,\rho_\pu$ & $\rho_\pn$
    & $\rho_\pu$ & $2\sqrt{\rho_\pu+\sqrt{\rho_\pu}}$ \\
    $\alpha_{\nun,\pn}$ & $n_+$ & $\pi,n_-,n_\un$
    & $\rho_\pn,\rho_\nun$ & $\pi$
    & $\rho_\nun$ & $2\sqrt{\rho_\nun+\sqrt{\rho_\nun}}$ \\
    \bottomrule
  \end{tabular*}
  \vskip-1ex%
\end{table}

Implications of the monotonicity of $\alpha_{\pu,\pn}$ are given as follows. Intuitively, when other factors are fixed, larger $n_\un$ or $n_-$ improves $\hat{g}_\pu$ or $\hat{g}_\pn$ respectively. However, it is complicated why $\alpha_{\pu,\pn}$ is monotonically decreasing with $n_+$ and increasing with $\pi$. The weights of the empirical average of $\cX_+$ is $2\pi$ in $\widehat{R}_\pu(g)$ and $\pi$ in $\widehat{R}_\pn(g)$, as in $\widehat{R}_\pu(g)$ it also joins the estimation of $(1-\pi)R_-(g)$. It makes $\cX_+$ more important for $\widehat{R}_\pu(g)$, and thus larger $n_+$ improves $\hat{g}_\pu$ more than $\hat{g}_\pn$. Moreover, $(1-\pi)R_-(g)$ is directly estimated in $\widehat{R}_\pn(g)$ and the concentration $\cO_p((1-\pi)/\sqrt{n_-})$ is better if $\pi$ is larger, whereas it is indirectly estimated through $R_{\un,-}(g)-\pi(1-R_+(g))$ in $\widehat{R}_\pu(g)$ and the concentration $\cO_p(\pi/\sqrt{n_+}+1/\sqrt{n_\un})$ is worse if $\pi$ is larger. As a result, when the sample sizes are fixed $\hat{g}_\pu$ is more (or less) favorable as $\pi$ decreases (or increases).

A natural question is what the monotonicity of $\alpha_{\pu,\pn}$ would be if we enforce $n_+$, $n_-$ and $n_\un$ to be proportional. To answer this question, we assume $n_+/n_-=\rho_\pn$, $n_+/n_\un=\rho_\pu$ and $n_-/n_\un=\rho_\nun$ where $\rho_\pn$, $\rho_\pu$ and $\rho_\nun$ are certain constants, then \eqref{eq:alpha-pu-pn} and \eqref{eq:alpha-nu-pn} can be rewritten as
\begin{equation*}
  \alpha_{\pu,\pn}=(\pi+\sqrt{\rho_\pu})/((1-\pi)\sqrt{\rho_\pn}),\quad
  \alpha_{\nun,\pn}=(1-\pi+\sqrt{\rho_\nun})/(\pi/\sqrt{\rho_\pn}).
\end{equation*}
As shown in Table~\ref{tab:property}, $\alpha_{\pu,\pn}$ is now increasing with $\rho_\pu$ and decreasing with $\rho_\pn$. It is because, for instance, when $\rho_\pn$ is fixed and $\rho_\pu$ increases, $n_\un$ is meant to decrease relatively to $n_+$ and $n_-$.

Finally, the properties will dramatically change if we enforce $\rho_\pn=\pi/(1-\pi)$ that approximately holds in ordinary supervised learning. Under this constraint, we have
\begin{equation*}\textstyle%
  \alpha_{\pu,\pn} =(\pi+\sqrt{\rho_\pu})/\sqrt{\pi(1-\pi)}
  \ge 2\sqrt{\rho_\pu+\sqrt{\rho_\pu}},
\end{equation*}
where the equality is achieved at $\bar{\pi}=\sqrt{\rho_\pu}/(2\sqrt{\rho_\pu}+1)$. Here, $\alpha_{\pu,\pn}$ decreases with $\pi$ if $\pi<\bar{\pi}$ and increases with $\pi$ if $\pi>\bar{\pi}$, though it is not convex in $\pi$. Only if $n_\un$ is sufficiently larger than $n_+$ (e.g., $\rho_\pu<0.04$), could $\alpha_{\pu,\pn}<1$ be possible and $\hat{g}_\pu$ have a tighter estimation error bound.

\subsection{Asymptotic comparisons}

In practice, we may find that $\hat{g}_\pu$ is worse than $\hat{g}_\pn$ and $\alpha_{\pu,\pn}>1$ given $\cX_+$, $\cX_-$ and $\cX_\un$. This is probably the consequence especially when $n_\un$ is not sufficiently larger than $n_+$ and $n_-$. Should we then try to collect much more U data or just give up PU learning? Moreover, if we are able to have as many U data as possible, is there any solution that would be provably better than PN learning?

We answer these questions by asymptotic comparisons. Notice that each pair of $(n_+,n_\un)$ yields a value of the RHS of \eqref{eq:est-err-pu-cmp}, each $(n_+,n_-)$ yields a value of the RHS of \eqref{eq:est-err-pn-cmp}, and consequently each triple of $(n_+,n_-,n_\un)$ determines a value of $\alpha_{\pu,\pn}$. Define the limits of $\alpha_{\pu,\pn}$ and $\alpha_{\nun,\pn}$ as
\begin{equation*}
  \alpha_{\pu,\pn}^*=\lim\nolimits_{n_+,n_-,n_\un\to\infty}\alpha_{\pu,\pn},\quad
  \alpha_{\nun,\pn}^*=\lim\nolimits_{n_+,n_-,n_\un\to\infty}\alpha_{\nun,\pn}.
\end{equation*}
Recall that $n_+$, $n_-$ and $n_\un$ are independent, and we need two conditions for the existence of $\alpha_{\pu,\pn}^*$ and $\alpha_{\nun,\pn}^*$: \emph{$n_+\to\infty$ and $n_-\to\infty$ in the same order} and \emph{$n_\un\to\infty$ faster in order than them}. It is a bit stricter than what is necessary, but is consistent with a practical assumption: \emph{P and N data are roughly equally expensive, whereas U data are much cheaper than P and N data}. Intuitively, since $\alpha_{\pu,\pn}$ and $\alpha_{\nun,\pn}$ measure relative qualities of the estimation error bounds of $\hat{g}_\pu$ and $\hat{g}_\nun$ against that of $\hat{g}_\pn$, $\alpha_{\pu,\pn}^*$ and $\alpha_{\nun,\pn}^*$ measure relative qualities of the \emph{limits of those bounds} accordingly.

In order to illustrate properties of $\alpha_{\pu,\pn}^*$ and $\alpha_{\nun,\pn}^*$, assume only $n_\un$ approaches infinity while $n_+$ and $n_-$ stay finite, so that $\alpha_{\pu,\pn}^*=\pi\sqrt{n_-}/((1-\pi)\sqrt{n_+})$ and $\alpha_{\nun,\pn}^*=(1-\pi)\sqrt{n_+}/(\pi\sqrt{n_-})$. Thus, $\alpha_{\pu,\pn}^*\alpha_{\nun,\pn}^*=1$, which implies $\alpha_{\pu,\pn}^*<1$ or $\alpha_{\nun,\pn}^*<1$ unless $n_+/n_-=\pi^2/(1-\pi)^2$. In principle, this exception should be exceptionally rare since $n_+/n_-$ is a rational number whereas $\pi^2/(1-\pi)^2$ is a real number. This argument constitutes our second main result.

\begin{theorem}[Asymptotic comparisons]
  \label{thm:asymptotic-cmp}%
  Assume \eqref{eq:cond-rade-comp} and one set of conditions below are satisfied:
  \begin{enumerate}
    \vspace{-1.5ex}%
    \setlength{\itemsep}{0em}%
    \item[(a)] $n_+<\infty$, $n_-<\infty$ and $n_\un\to\infty$. In this case, let $\alpha^*=(\pi\sqrt{n_-})/((1-\pi)\sqrt{n_+})$;
    \item[(b)] $0<\lim_{n_+,n_-\to\infty}n_+/n_-<\infty$ and $\lim_{n_+,n_-,n_\un\to\infty}(n_++n_-)/n_\un=0$. In this case, let $\alpha^*=\pi/((1-\pi)\sqrt{\rho_\pn^*})$ where $\rho_\pn^*=\lim_{n_+,n_-\to\infty}n_+/n_-$.
    \vspace{-1.5ex}%
  \end{enumerate}
  Then, either the limit of estimation error bounds of $\hat{g}_\pu$ will improve on that of $\hat{g}_\pn$ (i.e., $\alpha_{\pu,\pn}^*<1$) if $\alpha^*<1$, or the limit of bounds of $\hat{g}_\nun$ will improve on that of $\hat{g}_\pn$ (i.e., $\alpha_{\nun,\pn}^*<1$) if $\alpha^*>1$. The only exception is $n_+/n_-=\pi^2/(1-\pi)^2$ in (a) or $\rho_\pn^*=\pi^2/(1-\pi)^2$ in (b).
\end{theorem}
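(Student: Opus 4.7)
\medskip
\noindent\textbf{Proof proposal.}
My plan is to directly compute the two limits $\alpha_{\pu,\pn}^*$ and $\alpha_{\nun,\pn}^*$ from the defining formulas \eqref{eq:alpha-pu-pn} and \eqref{eq:alpha-nu-pn}, verify the key identity $\alpha_{\pu,\pn}^*\cdot\alpha_{\nun,\pn}^*=1$, and then read off the dichotomy by contrasting with $\alpha^*$. Once this identity is in hand, the conclusion is immediate: $\alpha^*<1$ forces $\alpha_{\pu,\pn}^*<1$, while $\alpha^*>1$ forces $\alpha_{\nun,\pn}^*=1/\alpha_{\pu,\pn}^*<1$, and the only equality case $\alpha^*=1$ is exactly the exception stated in the theorem. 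Theorem~\ref{thm:finite-sample-cmp} then translates this into the claim about the limiting bounds.

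Concretely, I would first rewrite
\begin{equation*}
  \alpha_{\pu,\pn}
  =\frac{\pi\sqrt{n_-/n_+}+\sqrt{n_-/n_\un}}{1-\pi},\qquad
  \alpha_{\nun,\pn}
  =\frac{(1-\pi)\sqrt{n_+/n_-}+\sqrt{n_+/n_\un}}{\pi},
\end{equation*}
so that only the ratios $n_-/n_+$, $n_-/n_\un$ and $n_+/n_\un$ matter. In case (a) the second terms in each numerator vanish because $n_\un\to\infty$ while $n_+,n_-$ stay finite, which gives $\alpha_{\pu,\pn}^*=\alpha^*$ and $\alpha_{\nun,\pn}^*=1/\alpha^*$ by a one-line calculation. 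In case (b) the conditions $n_+/n_-\to\rho_\pn^*\in(0,\infty)$ and $(n_++n_-)/n_\un\to0$ together imply $n_-/n_\un\to0$ and $n_+/n_\un\to0$; therefore $\sqrt{n_-/n_+}\to1/\sqrt{\rho_\pn^*}$ and $\sqrt{n_+/n_-}\to\sqrt{\rho_\pn^*}$, yielding $\alpha_{\pu,\pn}^*=\pi/((1-\pi)\sqrt{\rho_\pn^*})=\alpha^*$ and $\alpha_{\nun,\pn}^*=(1-\pi)\sqrt{\rho_\pn^*}/\pi=1/\alpha^*$.

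With $\alpha_{\pu,\pn}^*\alpha_{\nun,\pn}^*=1$ established in both cases, the dichotomy follows. The exception is simply the equation $\alpha^*=1$, which in case (a) reads $\pi\sqrt{n_-}=(1-\pi)\sqrt{n_+}$, equivalently $n_+/n_-=\pi^2/(1-\pi)^2$, and in case (b) reads $\rho_\pn^*=\pi^2/(1-\pi)^2$. Finally, combining $\alpha_{\pu,\pn}^*<1$ (resp.\ $\alpha_{\nun,\pn}^*<1$) with the ``if and only if'' part of Theorem~\ref{thm:finite-sample-cmp} applied along the limiting sequence shows that the corresponding limit of estimation-error bounds of $\hat g_\pu$ (resp.\ $\hat g_\nun$) lies strictly below that of $\hat g_\pn$.

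I do not anticipate any serious obstacle: the statement is essentially a limit computation plus the reciprocal identity. The one point that demands care is verifying in case (b) that $n_+/n_\un\to0$ follows from the stated assumption $(n_++n_-)/n_\un\to0$ together with $n_+/n_-\to\rho_\pn^*<\infty$, so that the cross terms $\sqrt{n_\pm/n_\un}$ genuinely drop out of the limits; this is a short argument but is what makes the product identity clean.
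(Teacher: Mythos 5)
Your proposal is correct and follows essentially the same route as the paper: compute $\alpha_{\pu,\pn}^*$ and $\alpha_{\nun,\pn}^*$ in each case, observe the reciprocal identity $\alpha_{\pu,\pn}^*\alpha_{\nun,\pn}^*=1$ with $\alpha^*=\alpha_{\pu,\pn}^*$, and read off the dichotomy with the single exception $\alpha^*=1$. The only difference is that you explicitly carry out case (b) (including the observation that $(n_++n_-)/n_\un\to0$ kills both cross terms), which the paper dismisses as ``analogous.''
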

\begin{proof}\vskip-1em%
  Note that $\alpha^*=\alpha_{\pu,\pn}^*$ in both cases. The proof of case (a) has been given as an illustration of the properties of $\alpha_{\pu,\pn}^*$ and $\alpha_{\nun,\pn}^*$. The proof of case (b) is analogous.
\end{proof}\vskip-1ex%

As a result, when we find that $\hat{g}_\pu$ is worse than $\hat{g}_\pn$ and $\alpha_{\pu,\pn}>1$, we should look at $\alpha^*$ defined in Theorem~\ref{thm:asymptotic-cmp}. If $\alpha^*<1$, $\hat{g}_\pu$ is promising and we should collect more U data; if $\alpha^*>1$ otherwise, we should give up $\hat{g}_\pu$, but instead $\hat{g}_\nun$ is promising and we should collect more U data as well. In addition, the gap between $\alpha^*$ and one indicates how many U data would be sufficient. If the gap is significant, slightly more U data may be enough; if the gap is slight, significantly more U data may be necessary. In practice, however, U data are cheaper but not free, and we cannot have as many U data as possible. Therefore, $\hat{g}_\pn$ is still of practical importance given limited budgets.

\subsection{Remarks}

Theorem~\ref{thm:guarantee-pu} relies on a fundamental lemma of the \emph{uniform deviation} from the risk estimator $\widehat{R}_\pu(g)$ to the risk $R(g)$:
\begin{lemma}
  \label{thm:uni-dev-pu}%
  For any $\delta>0$, with probability at least $1-\delta$,
  \begin{equation*}\textstyle%
    \sup\nolimits_{g\in\cG}|\widehat{R}_\pu(g)-R(g)| \le
    4\pi L_\ell\fR_{n_+,p_+}(\cG) +2L_\ell\fR_{n_\un,p}(\cG)
    +2\pi\sqrt{\frac{\ln(4/\delta)}{2n_+}} +\sqrt{\frac{\ln(4/\delta)}{2n_\un}}.
  \end{equation*}
\end{lemma}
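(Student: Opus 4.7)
The plan is to decompose $\widehat{R}_\pu(g)-R(g)$ into two empirical-process pieces and bound each by the standard Rademacher/McDiarmid pipeline, then combine via triangle inequality and a union bound.

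First I would write, using $R(g)=2\pi R_+(g)+R_{\un,-}(g)-\pi$ from \eqref{eq:risk-pu} and the definition of $\widehat{R}_\pu$,
\begin{equation*}
\widehat{R}_\pu(g)-R(g)
= 2\pi\bigl[\widehat{R}_+(g)-R_+(g)\bigr] + \bigl[\widehat{R}_{\un,-}(g)-R_{\un,-}(g)\bigr],
\end{equation*}
where $\widehat{R}_+(g)=\frac{1}{n_+}\sum_{x_i\in\cX_+}\ell(g(x_i),+1)$ and $\widehat{R}_{\un,-}(g)=\frac{1}{n_\un}\sum_{x_j\in\cX_\un}\ell(g(x_j),-1)$. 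Taking sup and using the triangle inequality then reduces the problem to bounding $\sup_{g\in\cG}|\widehat{R}_+(g)-R_+(g)|$ and $\sup_{g\in\cG}|\widehat{R}_{\un,-}(g)-R_{\un,-}(g)|$ separately.

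For each of these two suprema I would apply McDiarmid's inequality. The symmetric condition \eqref{eq:cond-sym-loss} together with non-negativity forces $\ell(\cdot,\pm1)\in[0,1]$, so changing one sample alters $\sup_g|\widehat{R}_+-R_+|$ by at most $1/n_+$ and $\sup_g|\widehat{R}_{\un,-}-R_{\un,-}|$ by at most $1/n_\un$. Applying McDiarmid to each, with a union bound over the four events $\{\sup(\widehat{R}_+-R_+)>\cdot\}$, $\{\sup(R_+-\widehat{R}_+)>\cdot\}$ and the two analogues for the unlabeled piece, each at confidence $1-\delta/4$, I would obtain with probability at least $1-\delta$,
\begin{align*}
\sup_{g\in\cG}|\widehat{R}_+(g)-R_+(g)| &\le \bE\sup_{g\in\cG}|\widehat{R}_+(g)-R_+(g)| + \sqrt{\tfrac{\ln(4/\delta)}{2n_+}},\\
\sup_{g\in\cG}|\widehat{R}_{\un,-}(g)-R_{\un,-}(g)| &\le \bE\sup_{g\in\cG}|\widehat{R}_{\un,-}(g)-R_{\un,-}(g)| + \sqrt{\tfrac{\ln(4/\delta)}{2n_\un}}.
\end{align*}

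Next I would bound the two expected suprema by Rademacher complexity. Standard symmetrization gives $\bE\sup_g|\widehat{R}_+(g)-R_+(g)|\le 2\fR_{n_+,p_+}(\ell(\cdot,+1)\circ\cG)$, and Talagrand's contraction lemma (using that $\ell$ is $L_\ell$-Lipschitz in its first argument) yields $\fR_{n_+,p_+}(\ell(\cdot,+1)\circ\cG)\le L_\ell\fR_{n_+,p_+}(\cG)$; similarly for the $\un,-$ term with $\fR_{n_\un,p}(\cG)$. Plugging these bounds back, multiplying the first piece by $2\pi$ and adding, I arrive at
\begin{equation*}
\sup_{g\in\cG}|\widehat{R}_\pu(g)-R(g)| \le 4\pi L_\ell \fR_{n_+,p_+}(\cG) + 2L_\ell\fR_{n_\un,p}(\cG) + 2\pi\sqrt{\tfrac{\ln(4/\delta)}{2n_+}} + \sqrt{\tfrac{\ln(4/\delta)}{2n_\un}},
\end{equation*}
which is exactly the claimed inequality.

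The main obstacle is essentially bookkeeping rather than any deep step: one must (i) take the absolute value seriously, which costs a factor of two in the Rademacher term via the standard two-sided symmetrization and forces the union bound that turns $\delta/2$ into $\delta/4$ inside the logarithm, and (ii) verify that the bounded-difference constants entering McDiarmid are indeed $1/n_+$ and $1/n_\un$, which in turn requires the boundedness of $\ell$ in $[0,1]$ coming from \eqref{eq:cond-sym-loss}. Everything else is a direct application of the symmetrization/contraction template.
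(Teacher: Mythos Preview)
Your proposal is correct and follows essentially the same route as the paper: decompose $\widehat{R}_\pu(g)-R(g)$ via \eqref{eq:risk-pu} into the $2\pi\widehat{R}_+$ and $\widehat{R}_{\un,-}$ pieces, bound each supremum separately by McDiarmid (using $\ell\in[0,1]$ from \eqref{eq:cond-sym-loss}) plus symmetrization and Talagrand's contraction, and combine by a union bound. Your accounting of the four $\delta/4$ events is slightly more explicit than the paper's, but the argument and constants are identical.
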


In Lemma~\ref{thm:uni-dev-pu}, $R(g)$ is w.r.t.\ $p(x,y)$, though $\widehat{R}_\pu(g)$ is w.r.t.\ $p_+(x)$ and $p(x)$. Rademacher complexities are also w.r.t.\ $p_+(x)$ and $p(x)$, and they can be bounded easily for $\cG$ defined in Eq.~\eqref{eq:fun-class}.

Theorems~\ref{thm:finite-sample-cmp} and \ref{thm:asymptotic-cmp} rely on \eqref{eq:cond-rade-comp}. Thanks to it, we can simplify Theorems~\ref{thm:guarantee-pu}, \ref{thm:guarantee-pn} and \ref{thm:guarantee-nu}. In fact, \eqref{eq:cond-rade-comp} holds for not only the special case of $\cG$ defined in \eqref{eq:fun-class}, but also the vast majority of discriminative models in machine learning that are nonlinear in parameters such as \emph{decision trees} (cf.\ Theorem~17 in \citep{bartlett02jmlr}) and \emph{feedforward neural networks} (cf.\ Theorem~18 in \citep{bartlett02jmlr}).

Theorem~2 in \citep{christo14nips} is a similar bound of the same order as our Lemma~\ref{thm:uni-dev-pu}. That theorem is based on a tricky decomposition of the risk
\[ \bE_{(X,Y)}[\ell(g(X),Y)] = \pi\bE_+[\tilde{\ell}(g(X),+1)]+\bE_{(X,Y)}[\tilde{\ell}(g(X),Y)], \]
where the surrogate loss $\tilde{\ell}(t,y)=(2/(y+3))\ell(t,y)$ is not $\ell$ for risk minimization and labels of $\cX_\un$ are needed for risk evaluation, so that no further bound is implied. Lemma~\ref{thm:uni-dev-pu} uses the same $\ell$ as risk minimization and requires no label of $\cX_\un$ for evaluating $\widehat{R}_\pu(g)$, so that it can serve as the stepping stone to our estimation error bound in Theorem~\ref{thm:guarantee-pu}.

\section{Experiments}
\label{sec:Experiment}%

In this section, we experimentally validate our theoretical findings.

\begin{figure*}[t]
  \centering\small
  \subcaptionbox{Theo.\ ($n_\un$ var.)\label{fig:expart-theo-nu}%
    }{\includegraphics[width=0.25\textwidth]{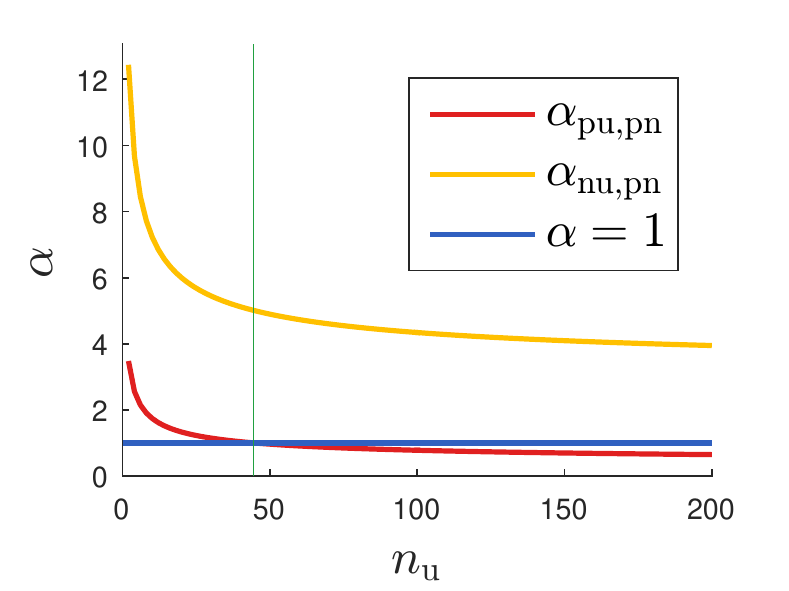}}%
  \subcaptionbox{Expe.\ ($n_\un$ var.)\label{fig:expart-expe-nu}%
    }{\includegraphics[width=0.25\textwidth]{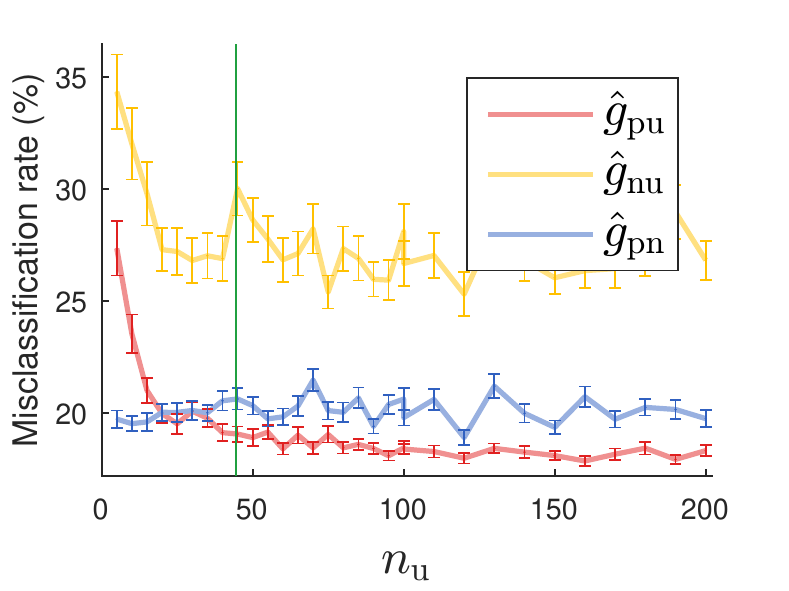}}%
  \subcaptionbox{Theo.\ ($\pi$ var.)\label{fig:expart-theo-pi}%
    }{\includegraphics[width=0.25\textwidth]{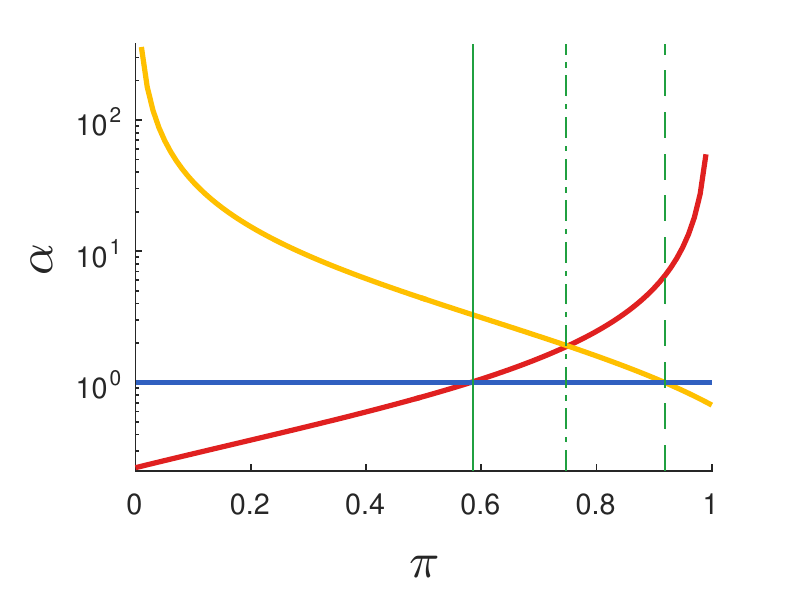}}%
  \subcaptionbox{Expe.\ ($\pi$ var.)\label{fig:expart-expe-pi}%
    }{\includegraphics[width=0.25\textwidth]{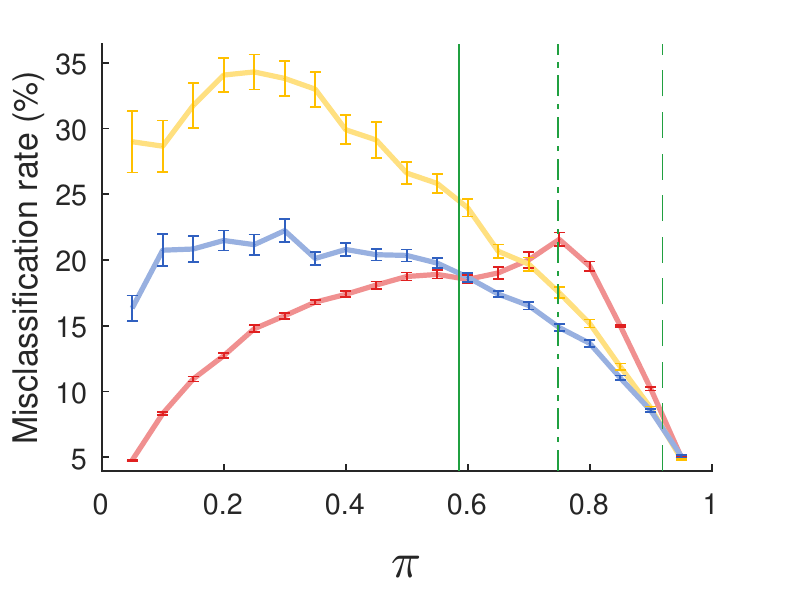}}
  \caption{Theoretical and experimental results based on artificial data.}
  \label{fig:experiment-artifical}%
  \vskip-2ex%
\end{figure*}

\paragraph{Artificial data}
Here, $\cX_+$, $\cX_-$ and $\cX_\un$ are in $\bR^2$ and drawn from three marginal densities
\begin{align*}
  p_+(x)=N(+1_2/\sqrt{2},I_2),\quad
  p_-(x)=N(-1_2/\sqrt{2},I_2),\quad
  p(x)=\pi p_+(x) + (1-\pi)p_-(x),
\end{align*}
where $N(\mu,\Sigma)$ is the normal distribution with mean $\mu$ and covariance $\Sigma$, $1_2$ and $I_2$ are the all-one vector and identity matrix of size $2$.
The test set contains one million data drawn from $p(x,y)$.

The model $g(x)=\langle{w,x}\rangle+b$ where $w\in\bR^2,b\in\bR$ and the scaled ramp loss $\ell_\sr$ are employed. In addition, an $\ell_2$-regularization is added with the regularization parameter fixed to $10^{-3}$, and there is no hard constraint on $\|w\|_2$ or $\|x\|_2$ as in Eq.~\eqref{eq:fun-class}. The solver for minimizing three regularized risk estimators comes from \citep{christo14nips} (refer also to \citep{collobert06icml,yuille01nips} for the optimization technique).

The results are reported in Figure~\ref{fig:experiment-artifical}. In \subref{fig:expart-theo-nu}\subref{fig:expart-expe-nu}, $n_+=45$, $n_-=5$, $\pi=0.5$, and $n_\un$ varies from $5$ to $200$; in \subref{fig:expart-theo-pi}\subref{fig:expart-expe-pi}, $n_+=45$, $n_-=5$, $n_\un=100$, and $\pi$ varies from $0.05$ to $0.95$. Specifically, \subref{fig:expart-theo-nu} shows $\alpha_{\pu,\pn}$ and $\alpha_{\nun,\pn}$ as functions of $n_\un$, and \subref{fig:expart-theo-pi} shows them as functions of $\pi$. For the experimental results, $\hat{g}_\pn$, $\hat{g}_\pu$ and $\hat{g}_\nun$ were trained based on $100$ random samplings for every $n_\un$ in \subref{fig:expart-expe-nu} and $\pi$ in \subref{fig:expart-expe-pi}, and means with standard errors of the misclassification rates are shown, as $\ell_\sr$ is classification-\ calibrated. Note that the empirical misclassification rates are essentially the risks w.r.t.\ $\ell_{01}$ as there were one million test data, and the fluctuations are attributed to the non-convex nature of $\ell_\sr$. Also, the curve of $\hat{g}_\pn$ is not a flat line in \subref{fig:expart-expe-nu}, since its training data at every $n_\un$ were exactly same as the training data of $\hat{g}_\pu$ and $\hat{g}_\nun$ for fair experimental comparisons.

In Figure~\ref{fig:experiment-artifical}, the theoretical and experimental results are highly consistent. The red and blue curves intersect at nearly the same positions in \subref{fig:expart-theo-nu}\subref{fig:expart-expe-nu} and in \subref{fig:expart-theo-pi}\subref{fig:expart-expe-pi}, even though the risk minimizers in the experiments were locally optimal and regularized, making our estimation error bounds inexact.

\begin{table}[b]
  \vskip-5ex%
  \centering\small
  \caption{Specification of benchmark datasets.}
  \label{tab:specification}
  \begin{tabular*}{0.99\textwidth}{lrrrrrrrrr}
    \toprule
    & banana & phoneme & magic & image & german & twonorm & waveform & spambase & coil2 \\
    \midrule
    dim     & 2    & 5    & 10    & 18   & 20   & 20   & 21   & 57   & 241 \\
    size    & 5300 & 5404 & 19020 & 2086 & 1000 & 7400 & 5000 & 4597 & 1500 \\
    P ratio & .448 & .293 & .648  & .570 & .300 & .500 & .329 & .394 & .500 \\
    \bottomrule
  \end{tabular*}
\end{table}

\begin{figure*}[t]
  \centering\small
  \subcaptionbox{Theo.\label{fig:expben-theo-nu}%
    }{\includegraphics[width=0.2\textwidth]{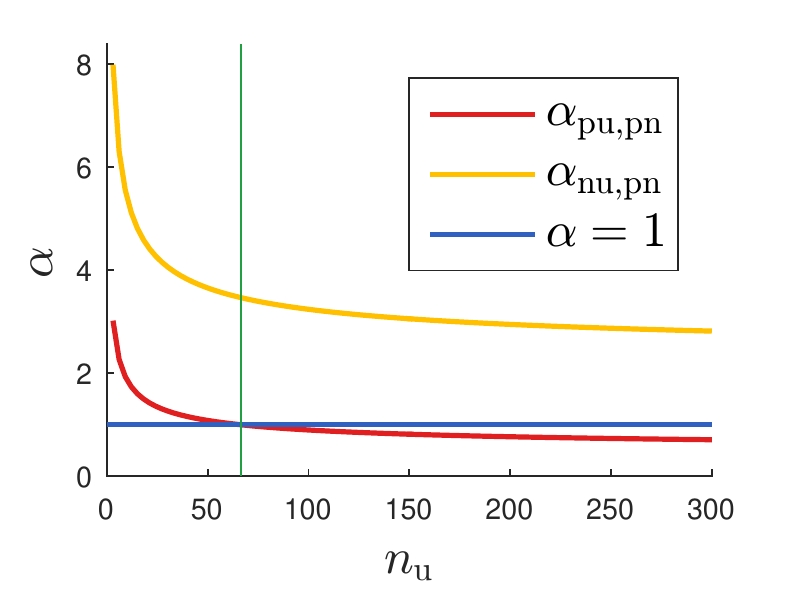}}%
  \subcaptionbox{banana}{\includegraphics[width=0.2\textwidth]{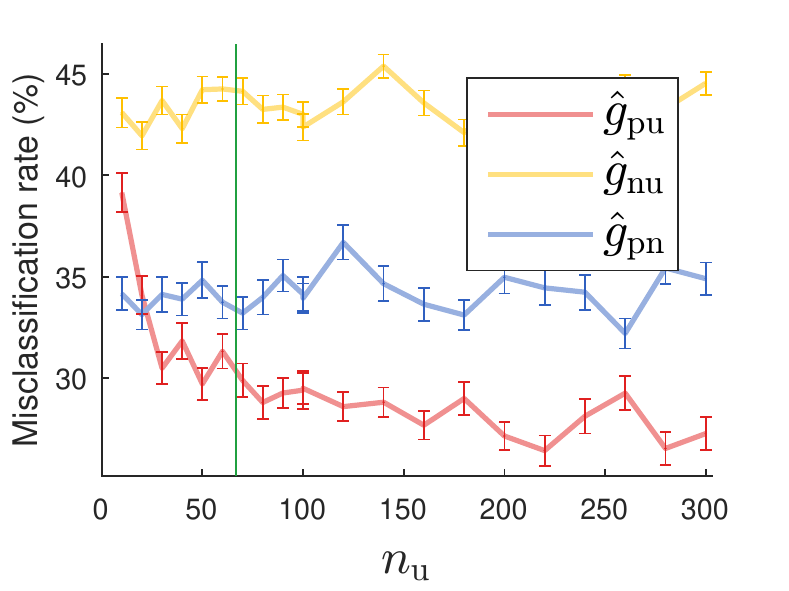}}%
  \subcaptionbox{phoneme}{\includegraphics[width=0.2\textwidth]{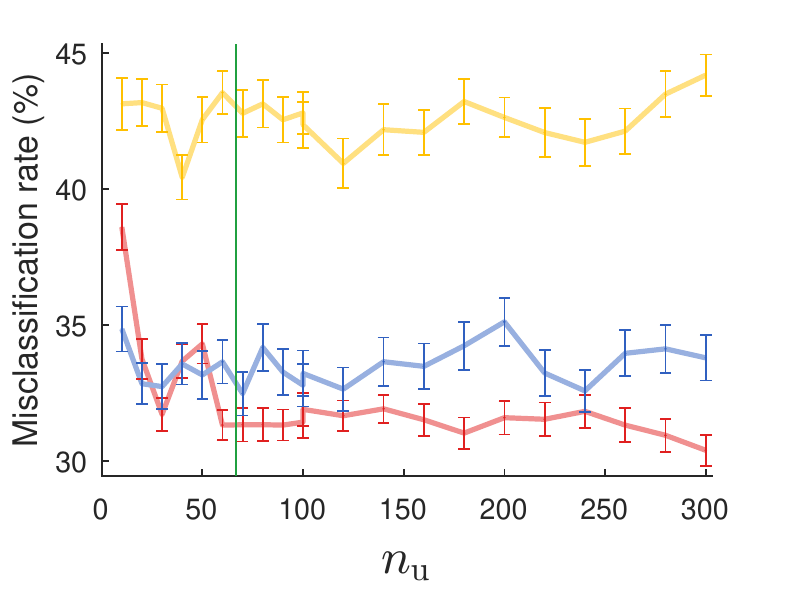}}%
  \subcaptionbox{magic}{\includegraphics[width=0.2\textwidth]{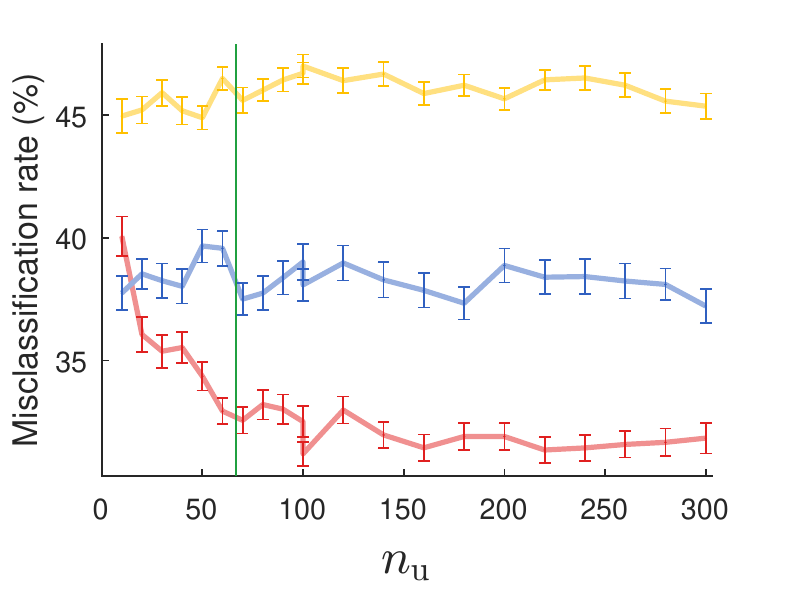}}%
  \subcaptionbox{image}{\includegraphics[width=0.2\textwidth]{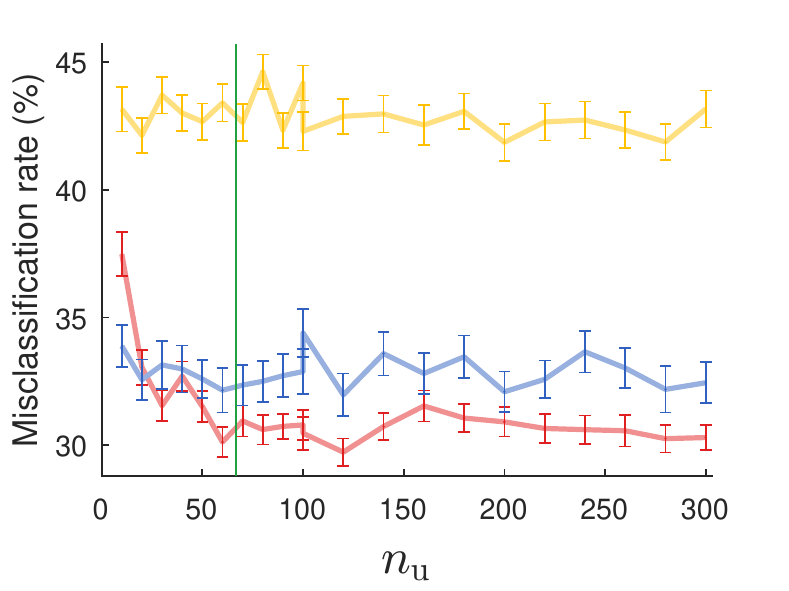}}\\
  \subcaptionbox{german}{\includegraphics[width=0.2\textwidth]{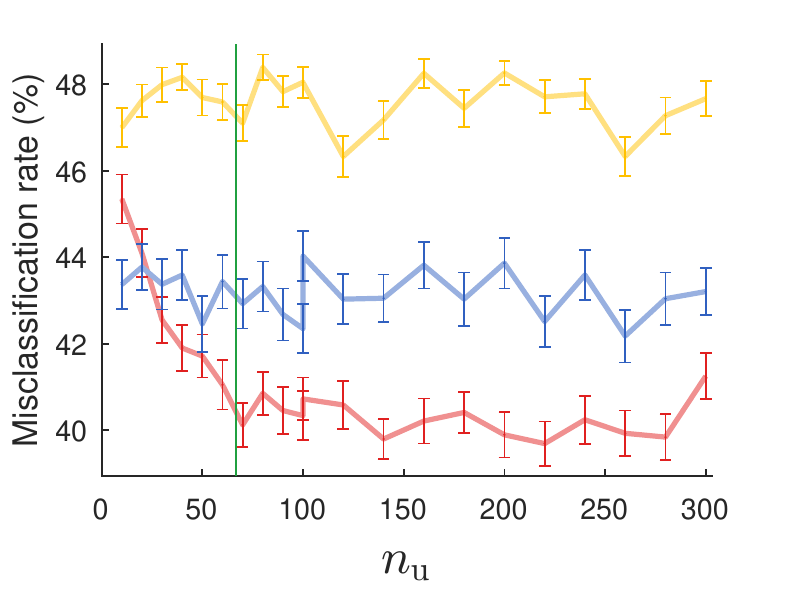}}%
  \subcaptionbox{twonorm}{\includegraphics[width=0.2\textwidth]{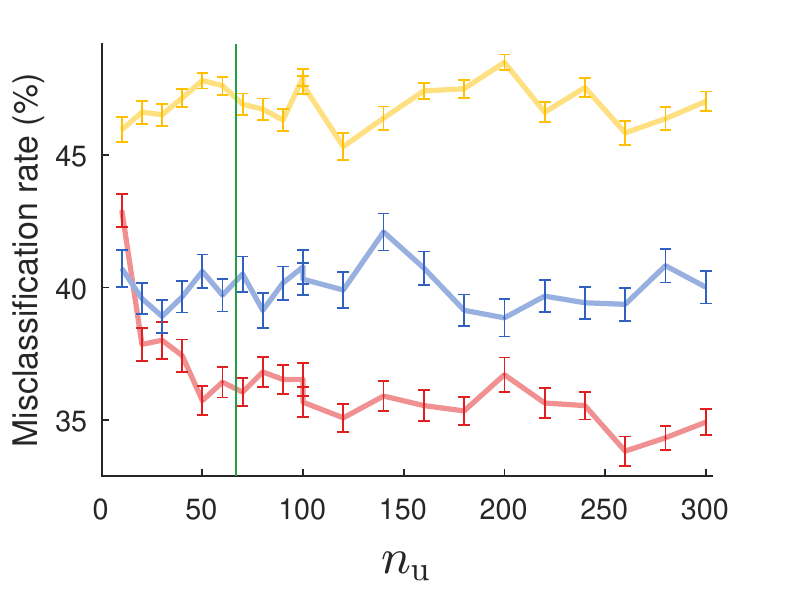}}%
  \subcaptionbox{waveform}{\includegraphics[width=0.2\textwidth]{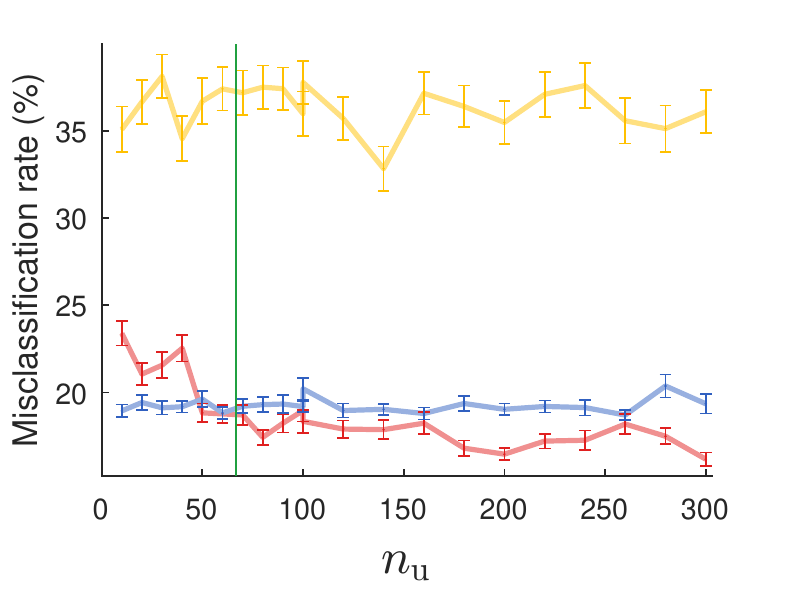}}%
  \subcaptionbox{spambase}{\includegraphics[width=0.2\textwidth]{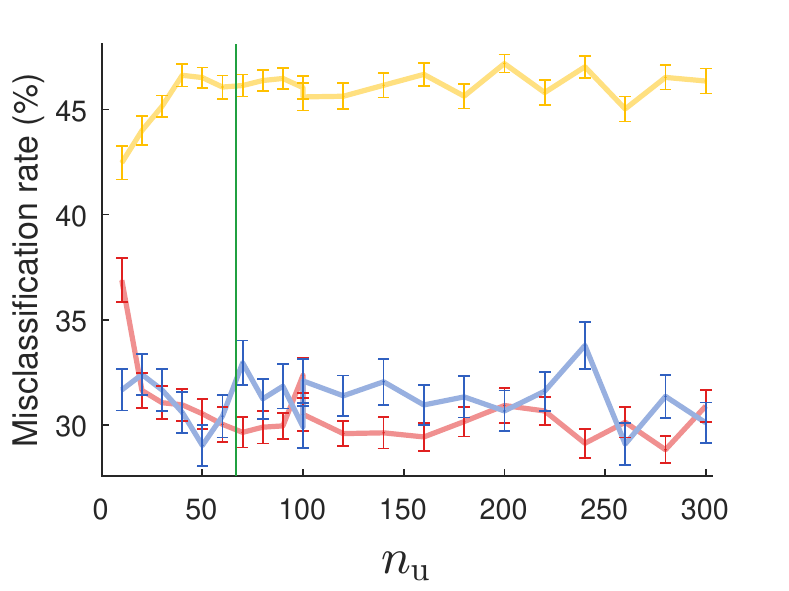}}%
  \subcaptionbox{coil2}{\includegraphics[width=0.2\textwidth]{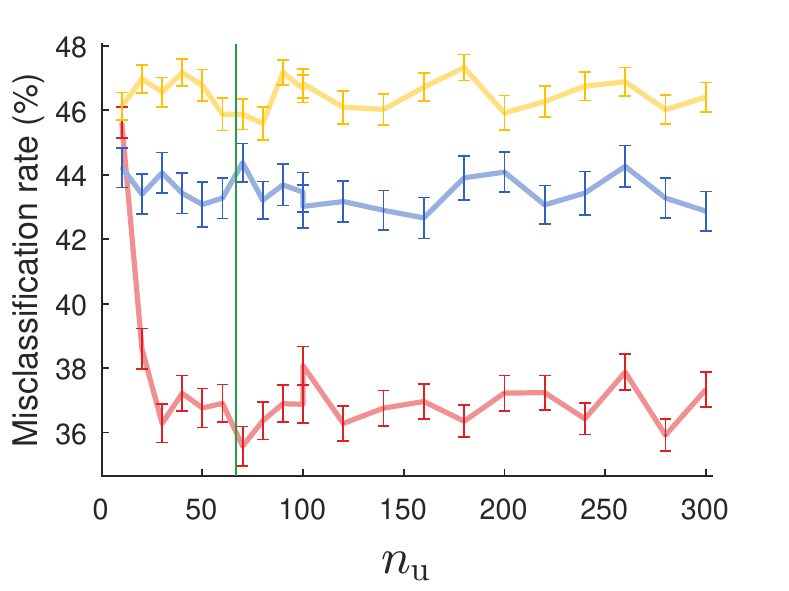}}%
  \caption{Experimental results based on benchmark data by varying $n_\un$.}
  \label{fig:experiment-benchmark-nu}%
  \subcaptionbox{Theo.\label{fig:expben-theo-pi}%
    }{\includegraphics[width=0.2\textwidth]{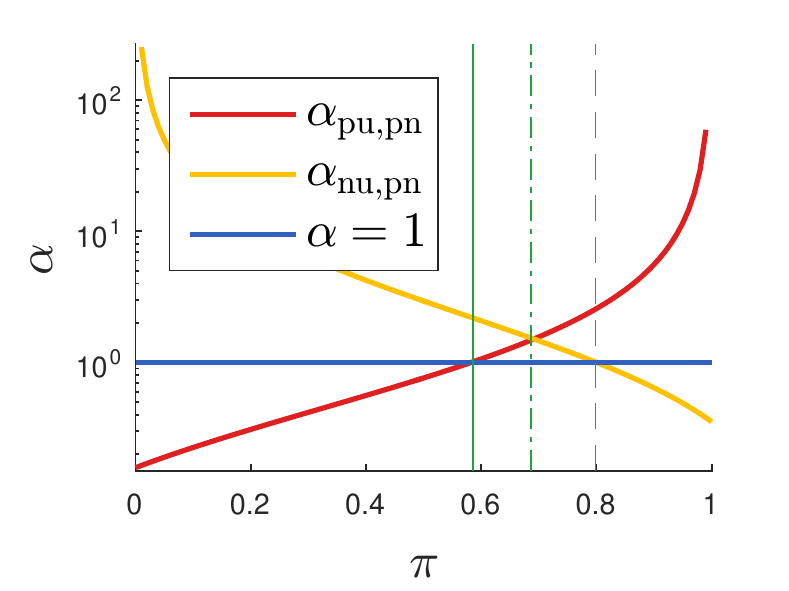}}%
  \subcaptionbox{banana}{\includegraphics[width=0.2\textwidth]{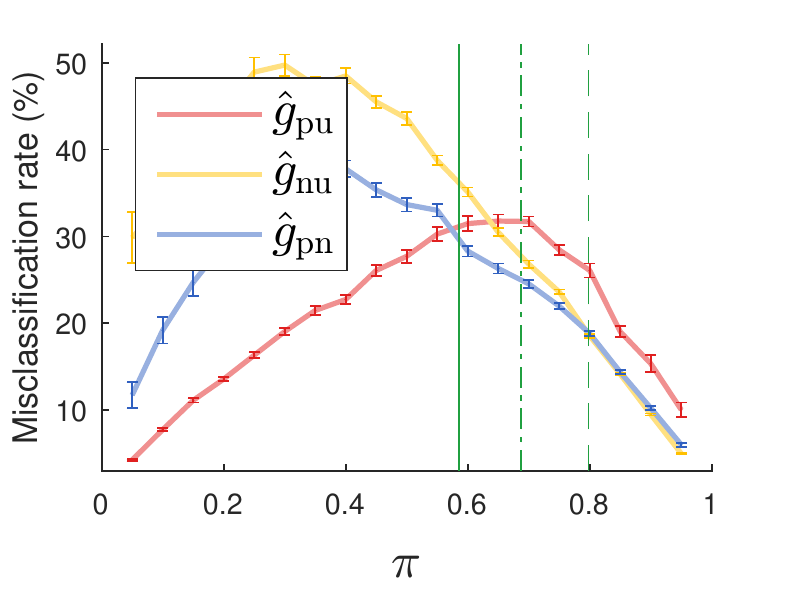}}%
  \subcaptionbox{phoneme}{\includegraphics[width=0.2\textwidth]{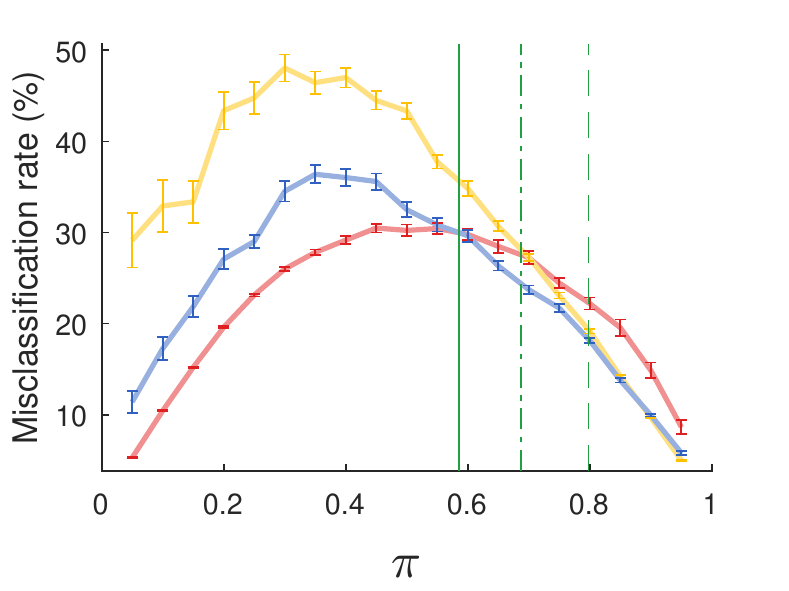}}%
  \subcaptionbox{magic}{\includegraphics[width=0.2\textwidth]{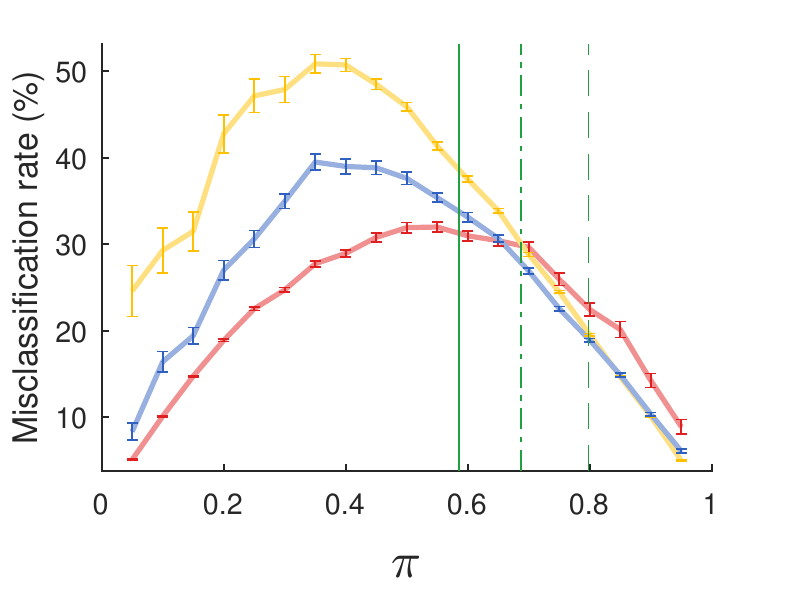}}%
  \subcaptionbox{image}{\includegraphics[width=0.2\textwidth]{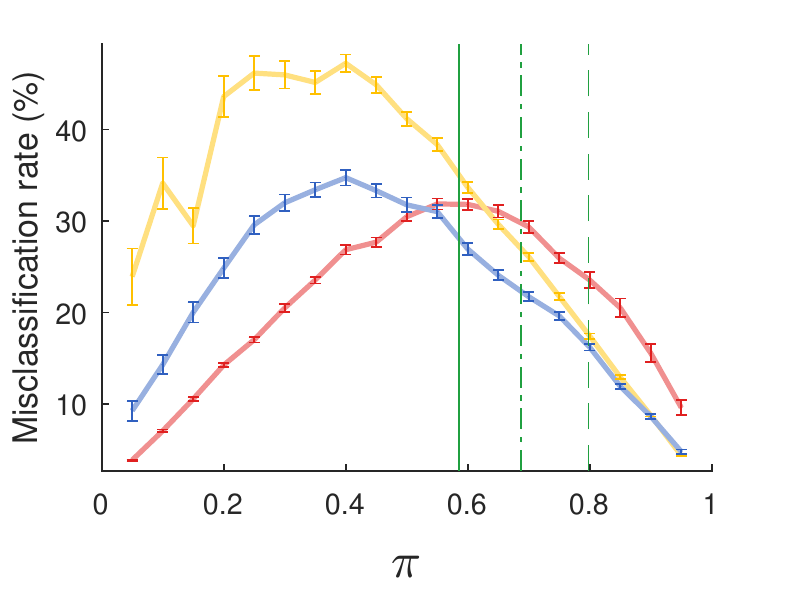}}\\
  \subcaptionbox{german}{\includegraphics[width=0.2\textwidth]{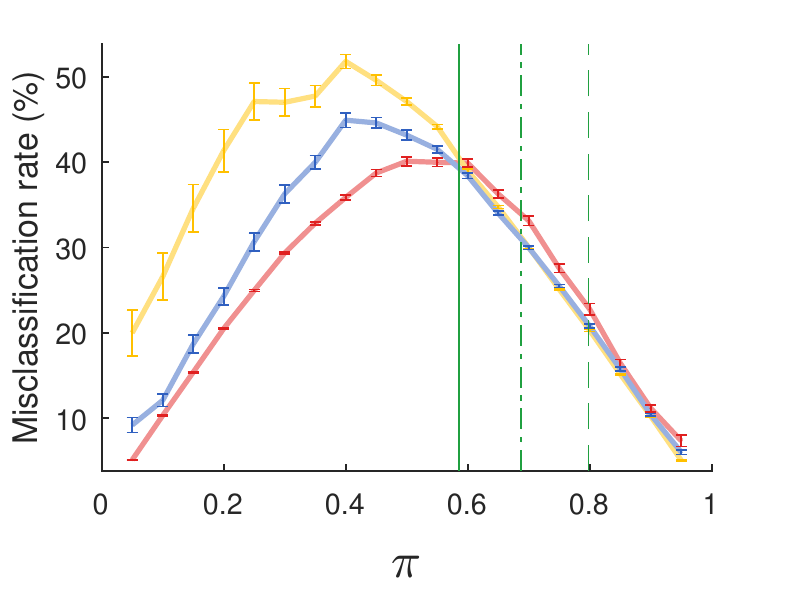}}%
  \subcaptionbox{twonorm}{\includegraphics[width=0.2\textwidth]{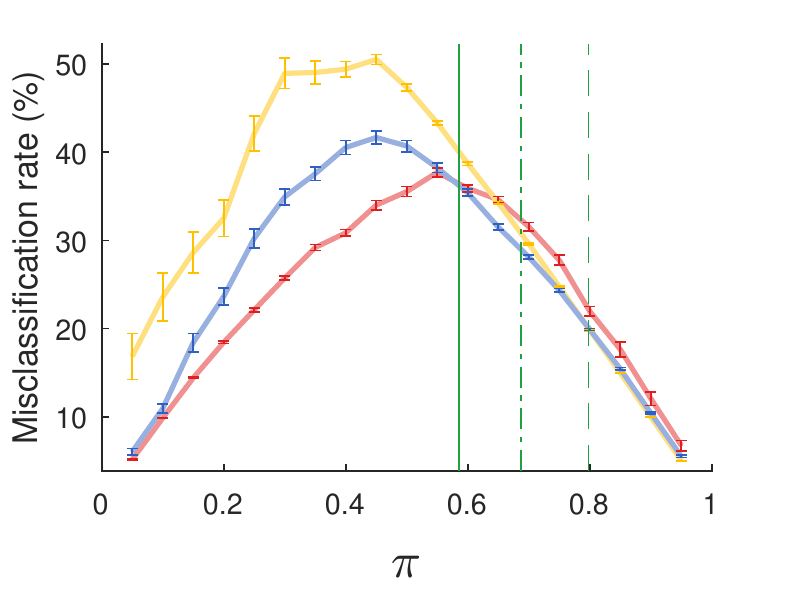}}%
  \subcaptionbox{waveform}{\includegraphics[width=0.2\textwidth]{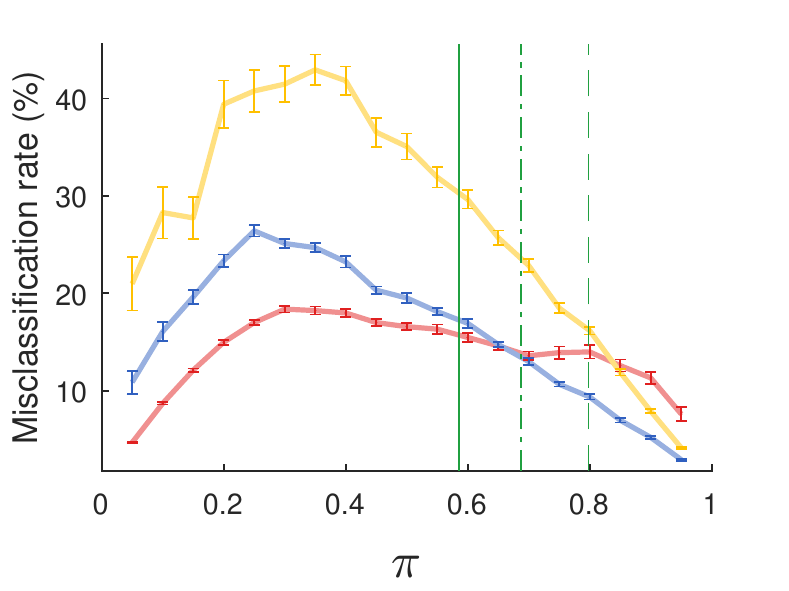}}%
  \subcaptionbox{spambase}{\includegraphics[width=0.2\textwidth]{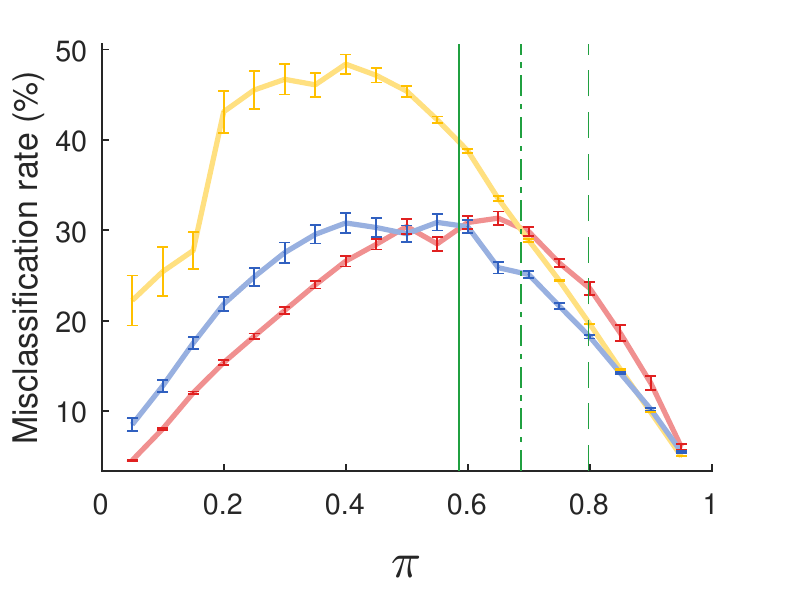}}%
  \subcaptionbox{coil2}{\includegraphics[width=0.2\textwidth]{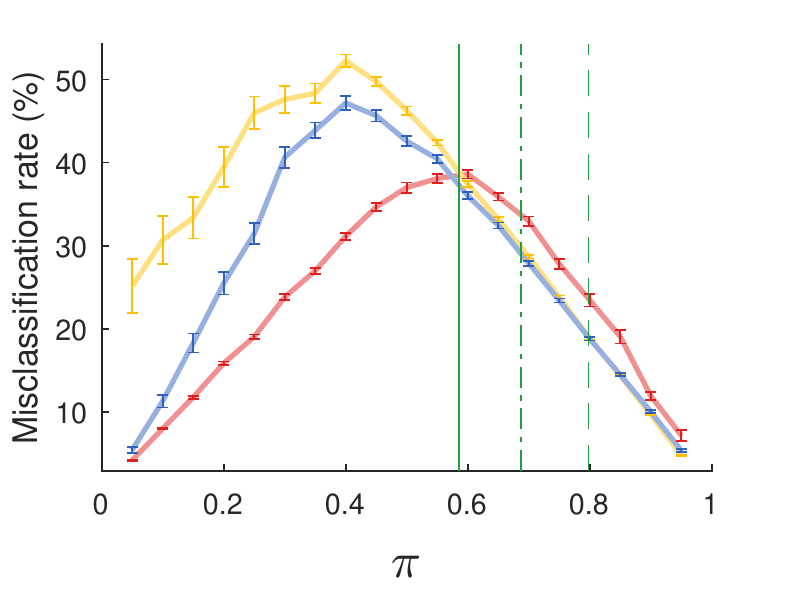}}%
  \caption{Experimental results based on benchmark data by varying $\pi$.}
  \label{fig:experiment-benchmark-pi}%
  \vskip-2ex%
\end{figure*}

\paragraph{Benchmark data}
Table~\ref{tab:specification} summarizes the specification of benchmarks, which were downloaded from many sources including the \emph{IDA benchmark repository} \citep{ratsch01mlj}, the \emph{UCI machine learning repository}, the \emph{semi-supervised learning book} \citep{chapelle06SSL}, and the \emph{European ESPRIT 5516 project}.%
\footnote{%
  See \url{http://www.raetschlab.org/Members/raetsch/benchmark/} for IDA,
  \url{http://archive.ics.uci.edu/ml/} for UCI,
  \url{http://olivier.chapelle.cc/ssl-book/} for the SSL book
  and \url{https://www.elen.ucl.ac.be/neural-nets/Research/Projects/ELENA/} for the ELENA project.
}
In Table~\ref{tab:specification}, three rows  describe the number of features, the number of data, and the ratio of P data according to the true class labels. Given a random sampling of $\cX_+$, $\cX_-$ and $\cX_\un$, the test set has all the remaining data if they are less than $10^4$, or else drawn uniformly from the remaining data of size $10^4$.

For benchmark data, the linear model for the artificial data is not enough, and its kernel version is employed. Consider training $\hat{g}_\pu$ for example. Given a random sampling, $g(x)=\langle{w,\phi(x)}\rangle+b$ is used where $w\in\bR^{n_++n_\un},b\in\bR$ and $\phi:\bR^d\to\bR^{n_++n_\un}$ is the \emph{empirical kernel map} \citep{scholkopf01LK} based on $\cX_+$ and $\cX_\un$ for the \emph{Gaussian kernel}. The kernel width and the regularization parameter are selected by five-fold cross-validation for each risk minimizer and each random sampling.

The results by varying $n_\un$ and $\pi$ are reported in Figures~\ref{fig:experiment-benchmark-nu} and \ref{fig:experiment-benchmark-pi} respectively. Similarly to Figure~\ref{fig:experiment-artifical}, in Figure~\ref{fig:experiment-benchmark-nu}, $n_+=25$, $n_-=5$, $\pi=0.5$, and $n_\un$ varies from $10$ to $300$, while in Figure~\ref{fig:experiment-benchmark-pi}, $n_+=25$, $n_-=5$, $n_\un=200$, and $\pi$ varies from $0.05$ to $0.95$. Figures~\ref{fig:expben-theo-nu} and \ref{fig:expben-theo-pi} depict $\alpha_{\pu,\pn}$ and $\alpha_{\nun,\pn}$ as functions of $n_\un$ and $\pi$, and all the remaining subfigures depict means with standard errors of the misclassification rates based on $100$ random samplings for every $n_\un$ and $\pi$.

The theoretical and experimental results based on benchmarks are still highly consistent. However, unlike in Figure~\ref{fig:expart-expe-nu}, in Figure~\ref{fig:experiment-benchmark-nu} only the errors of $\hat{g}_\pu$ decrease with $n_\un$, and the errors of $\hat{g}_\nun$ just fluctuate randomly. This may be because benchmark data are more difficult than artificial data and hence $n_-=5$ is not sufficiently informative for $\hat{g}_\nun$ even when $n_\un=300$. On the other hand, we can see that Figures~\ref{fig:expben-theo-pi} and \ref{fig:expart-theo-pi} look alike, and so do all the remaining subfigures in Figure~\ref{fig:experiment-benchmark-pi} and Figure~\ref{fig:expart-expe-pi}. Nevertheless, three intersections in Figure~\ref{fig:expben-theo-pi} are closer than those in Figure~\ref{fig:expart-theo-pi}, as $n_\un=200$ in Figure~\ref{fig:expben-theo-pi} and $n_\un=100$ in Figure~\ref{fig:expart-theo-pi}. The three intersections will become a single one if $n_\un=\infty$. By observing the experimental results, three curves in Figure~\ref{fig:experiment-benchmark-pi} are also closer than those in Figure~\ref{fig:expart-expe-pi} when $\pi\ge0.6$, which demonstrates the validity of our theoretical findings.

\section{Conclusions}

In this paper, we studied a fundamental problem in PU learning, namely, when PU learning is likely to outperform PN learning. Estimation error bounds of the risk minimizers were established in PN, PU and NU learning. We found that under the very mild assumption \eqref{eq:cond-rade-comp}: The PU (or NU) bound is tighter than the PN bound, if $\alpha_{\pu,\pn}$ in \eqref{eq:alpha-pu-pn} (or $\alpha_{\nun,\pn}$ in \eqref{eq:alpha-nu-pn}) is smaller than one (cf.\ Theorem~\ref{thm:finite-sample-cmp}); either the limit of $\alpha_{\pu,\pn}$ or that of $\alpha_{\nun,\pn}$ will be smaller than one, if the size of U data increases faster in order than the sizes of P and N data (cf.\ Theorem~\ref{thm:asymptotic-cmp}). We validated our theoretical findings experimentally using one artificial data and nine benchmark data.

\subsubsection*{Acknowledgments}

GN was supported by the JST CREST program and Microsoft Research Asia. MCdP, YM, and MS were supported by the JST CREST program. TS was supported by JSPS KAKENHI 15J09111.

\clearpage
{\small\bibliography{pnu}}

\clearpage
\appendix
\section{Proofs}
\label{sec:Proof}%

In this appendix, we prove Theorem~\ref{thm:cc-sr} in Section~\ref{sec:Estimator}, and Lemma~\ref{thm:uni-dev-pu}, Theorem~\ref{thm:guarantee-pu}, and Corollary~\ref{thm:est-err-cmp} in Section~\ref{sec:Comparison}. The proofs of Theorems~\ref{thm:guarantee-pn} and \ref{thm:guarantee-nu} are omitted, since they are essentially similar to that of Theorem~\ref{thm:guarantee-pu} relying on slightly different uniform deviation bounds.

\subsection{Proof of Theorem~\ref{thm:cc-sr}}

The proof is straightforward. Denote by
\[ \pi_+(x)=p(Y=+1\mid X=x),\quad \pi_-(x)=p(Y=-1\mid X=x), \]
then the \emph{conditional risk} is
\begin{align*}
\bE_Y[\ell_\sr(g(X),Y)\mid X=x]
&= \pi_+(x)\ell_\sr(g(x),+1)+\pi_-(x)\ell_\sr(g(x),-1)\\
&=
\begin{cases}
\pi_+(x),&g(x)\le-1,\\
1/2-(\pi_+(x)-\pi_-(x))g(x)/2,&-1<g(x)<+1,\\
\pi_-(x),&g(x)\ge+1.
\end{cases}
\end{align*}
The minimum is achieved by $g(x)=\sign(\pi_+(x)-\pi_-(x))$, which is actually the Bayes classifier. Therefore, $\ell_\sr$ is classification-calibrated according to Theorem~1.3.c in \citep{bartlett06jasa}.
\qed

\subsection{Proof of Lemma~\ref{thm:uni-dev-pu}}

Similarly to the decomposition in Eq.~\eqref{eq:risk-pu} such that
\[ R(g) = 2\pi R_+(g) +R_{\un,-}(g) -\pi, \]
we have seen in the definition of $\widehat{R}_\pu(g)$ that it can also be decomposed into
\[ \widehat{R}_\pu(g) = 2\pi\widehat{R}_+(g) +\widehat{R}_{\un,-}(g) -\pi, \]
where
\begin{equation*}\textstyle%
  \widehat{R}_+(g)=\frac{1}{n_+}\sum_{x_i\in\cX_+}\ell(g(x_i),+1),\quad
  \widehat{R}_{\un,-}(g)=\frac{1}{n_\un}\sum_{x_j\in\cX_\un}\ell(g(x_j),-1)
\end{equation*}
are the empirical averages corresponding to $R_+(g)$ and $R_{\un,-}(g)$. Due to the sub-additivity of the supremum operators, it holds that
\begin{equation*}
  \sup\nolimits_{g\in\cG}|\widehat{R}_\pu(g)-R(g)|
  \le 2\pi\sup\nolimits_{g\in\cG}|\widehat{R}_+(g)-R_+(g)|
  +\sup\nolimits_{g\in\cG}|\widehat{R}_{\un,-}(g)-R_{\un,-}(g)|.
\end{equation*}
As a result, in order to prove Lemma~\ref{thm:uni-dev-pu}, it suffices to show that with probability at least $1-\delta/2$, the uniform deviation bounds below hold separately:
\begin{align}
  \label{eq:uni-dev-rp}%
  \sup\nolimits_{g\in\cG}|\widehat{R}_+(g)-R_+(g)|
  &\textstyle\le 2L_\ell\fR_{n_+,p_+}(\cG) +\sqrt{\frac{\ln(4/\delta)}{2n_+}},\\
  \label{eq:uni-dev-rum}%
  \sup\nolimits_{g\in\cG}|\widehat{R}_{\un,-}(g)-R_{\un,-}(g)|
  &\textstyle\le 2L_\ell\fR_{n_\un,p}(\cG) +\sqrt{\frac{\ln(4/\delta)}{2n_\un}}.
\end{align}
In the following we prove \eqref{eq:uni-dev-rp}, and then \eqref{eq:uni-dev-rum} can be proven using the same proof technique.

Since the surrogate loss $\ell$ is bounded by $0$ and $1$ according to \eqref{eq:cond-sym-loss}, the change of $\widehat{R}_+(g)$ will be no more than $1/n_+$ if some $x_i$ in $\cX_+$ is replaced with $x'_i$. Thus \emph{McDiarmid's inequality} \citep{mcdiarmid89MBD} implies
\begin{equation*}\textstyle%
  \pr\left[ |\widehat{R}_+(g)-R_+(g)|\ge\epsilon \right]
  \le 2\exp\left(-\frac{2\epsilon^2}{n_+(1/n_+)^2}\right)
\end{equation*}
for any fixed $g$. Equivalently, for any fixed $g$, with probability at least $1-\delta/2$,
\begin{equation*}\textstyle%
  |\widehat{R}_+(g)-R_+(g)| \le \sqrt{\frac{\ln(4/\delta)}{2n_+}}.
\end{equation*}
Then, according to the \emph{basic uniform deviation bound} using the Rademacher complexity \citep{mohri12FML}, with probability at least $1-\delta/2$,
\begin{equation}
  \label{eq:basic-uni-dev-rp}\textstyle%
  \sup\nolimits_{g\in\cG}|\widehat{R}_+(g)-R_+(g)|
  \le 2\fR_{n_+,p_+}(\ell\circ\cG) +\sqrt{\frac{\ln(4/\delta)}{2n_+}},
\end{equation}
where $\fR_{n_+,p_+}(\ell\circ\cG)$ is the Rademacher complexity of the \emph{composite function class} $(\ell\circ\cG)$ for the sampling of size $n_+$ from $p_+(x)$ defined by
\begin{equation*}\textstyle%
  \fR_{n_+,p_+}(\ell\circ\cG) = \bE_{\cX_+\sim p_+^{n_+}}\bE_\sigma
  \left[ \sup_{g\in\cG}\frac{1}{n_+}\sum_{x_i\in\cX_+}\sigma_i\ell(g(x_i),+1) \right].
\end{equation*}
As $\ell(t,y)$ is $L_\ell$-Lipschitz-continuous in $t$ for every $y$, we have $\fR_{n_+,p_+}(\ell\circ\cG) \le L_\ell\fR_{n_+,p_+}(\cG)$ by \emph{Talagrand's contraction lemma} \citep{ledoux91PBS}, which proves \eqref{eq:uni-dev-rp}.
\qed

\subsection{Proof of Theorem~\ref{thm:guarantee-pu}}

Based on Lemma~\ref{thm:uni-dev-pu}, the estimation error bound \eqref{eq:est-err-pu} is proven through
\begin{align*}
  R(\hat{g}_\pu)-R(g^*)
  &= \left(\widehat{R}_\pu(\hat{g}_\pu)-\widehat{R}_\pu(g^*)\right)
  +\left(R(\hat{g}_\pu)-\widehat{R}_\pu(\hat{g}_\pu)\right)
  +\left(\widehat{R}_\pu(g^*)-R(g^*)\right)\\
  &\le 0 +2\sup\nolimits_{g\in\cG}|\widehat{R}_\pu(g)-R(g)|\\
  &\textstyle\le 8\pi L_\ell\fR_{n_+,p_+}(\cG) +4L_\ell\fR_{n_\un,p}(\cG)
  +2\pi\sqrt{\frac{2\ln(4/\delta)}{n_+}} +\sqrt{\frac{2\ln(4/\delta)}{n_\un}},
\end{align*}
where we have used $\widehat{R}_\pu(\hat{g}_\pu)\le\widehat{R}_\pu(g^*)$ by the definition of $\hat{g}_\pu$.

Moreover, if $\ell$ is classification-calibrated, Theorem~1 in \citep{bartlett06jasa} implies that there will exist a convex, invertible and nondecreasing transformation $\psi_\ell$ with $\psi_\ell(0)=0$, such that
\[ \psi_\ell(I(\hat{g}_\pu)-I^*) \le R(\hat{g}_\pu)-R^*. \]
Hence, let $\varphi=\psi_\ell^{-1}$, we have
\begin{align*}
I(\hat{g}_\pu)-I^*
&\le \varphi(R(\hat{g}_\pu)-R^*)\\
&= \varphi(R(g^*)-R^*+R(\hat{g}_\pu)-R(g^*)),
\end{align*}
and subsequently the excess risk bound \eqref{eq:ex-risk-pu} is an immediate corollary of \eqref{eq:est-err-pu}.
\qed

\subsection{Proof of Corollary~\ref{thm:est-err-cmp}}

Given \eqref{eq:cond-rade-comp}, the estimation error bound \eqref{eq:est-err-pu} can be rewritten into
\begin{align*}
  R(\hat{g}_\pu)-R(g^*)
  &\textstyle\le
  8\pi L_\ell C_\cG/\sqrt{n_+} +2\pi\sqrt{\frac{2\ln(4/\delta)}{n_+}}
  +4L_\ell C_\cG/\sqrt{n_\un} +\sqrt{\frac{2\ln(4/\delta)}{n_\un}}\\
  &\textstyle= 2\pi f(\delta)/\sqrt{n_+} +f(\delta)/\sqrt{n_\un},
\end{align*}
where $f(\delta)=4L_\ell C_\cG+\sqrt{2\ln(4/\delta)}$. This proves \eqref{eq:est-err-pu-cmp}. In exactly the same way, we could get \eqref{eq:est-err-pn-cmp} from \eqref{eq:est-err-pn} and \eqref{eq:est-err-nu-cmp} from \eqref{eq:est-err-nu}.

Consider the special case of $\cG$ defined in \eqref{eq:fun-class}. Recall that $\fR_{n,q}(\cG)$ is the Rademacher complexity of $\cG$ for $\cX=\{x_1,\ldots,x_n\}$ with each $x_i$ drawn from $q(x)$. Given any such $\cX$, denote by $\widehat{\fR}_\cX(\cG)$ the \emph{empirical Rademacher complexity} of $\cG$ conditioned on $\cX$ \citep{mohri12FML}:
\begin{equation*}\textstyle%
  \widehat{\fR}_\cX(\cG) = \bE_\sigma
  \left[\sup_{g\in\cG}\frac{1}{n}\sum_{x_i\in\cX}\sigma_ig(x_i)\right].
\end{equation*}
It is known that $\widehat{\fR}_\cX(\cG) \le C_wC_\phi/\sqrt{n}$ and thus $\fR_{n,q}(\cG)=\bE_\cX[\widehat{\fR}_\cX(\cG)]\le C_wC_\phi/\sqrt{n}$ \citep{mohri12FML}. Then, letting $C_\cG=C_wC_\phi$ completes the proof.
\qed

\end{document}